\pdfoutput=1
\documentclass[letterpaper]{article}
\usepackage{uai2019}
\usepackage[margin=1in]{geometry}
\usepackage{times}
\usepackage[square,numbers]{natbib}

\usepackage[utf8]{inputenc} 
\usepackage[T1]{fontenc}    
\usepackage{hyperref}       
\usepackage{url}            
\usepackage{booktabs}       
\usepackage{amsfonts}       
\usepackage{nicefrac}       
\usepackage{microtype}      


\usepackage{amsmath,amsthm,amssymb}

\usepackage{color}
\definecolor{Blue}{rgb}{0.9,0.3,0.3}
\usepackage{cancel}

\newcommand{\squishlist}{
   \begin{list}{$\bullet$}
    { \setlength{\itemsep}{0pt}      \setlength{\parsep}{3pt}
      \setlength{\topsep}{3pt}       \setlength{\partopsep}{0pt}
      \setlength{\leftmargin}{1.5em} \setlength{\labelwidth}{1em}
      \setlength{\labelsep}{0.5em} } }

\newcommand{\squishlisttwo}{
   \begin{list}{$\bullet$}
    { \setlength{\itemsep}{0pt}    \setlength{\parsep}{0pt}
      \setlength{\topsep}{0pt}     \setlength{\partopsep}{0pt}
      \setlength{\leftmargin}{2em} \setlength{\labelwidth}{1.5em}
      \setlength{\labelsep}{0.5em} } }

\newcommand{\squishend}{
    \end{list}  }

\newcommand{\be}{\begin{equation}}
\newcommand{\ee}{\end{equation}}
\newcommand{\bea}{\begin{eqnarray}}
\newcommand{\eea}{\end{eqnarray}}
\newcommand{\beaa}{\begin{eqnarray*}}
\newcommand{\eeaa}{\end{eqnarray*}}


\DeclareMathAlphabet{\mathpzc}{OT1}{pzc}{m}{n}




\usepackage{tabu}
\usepackage{paralist}
\usepackage{color}
\usepackage[ruled]{algorithm2e}
\usepackage[noend]{algorithmic}
\usepackage{amsbsy}
\usepackage{amsmath}
\usepackage{bm}
\usepackage{dsfont}
\usepackage[all]{xy}

\usepackage{graphicx}
\usepackage{epstopdf}
\usepackage{epsfig}
\graphicspath{{./figures/}}

\usepackage[usenames,dvipsnames]{xcolor}
\usepackage[backgroundcolor=white]{todonotes}

\usepackage[capitalize,noabbrev]{cleveref}

\usepackage[skip=0pt]{caption}

\newtheorem{assumption}{Assumption}
\newtheorem{mydefinition}{Definition}

\newtheorem{theorem}[mydefinition]{Theorem}
\newtheorem{lemma}[mydefinition]{Lemma}

\newcommand{\cD}{\mathcal{D}}
\newcommand{\cE}{\mathcal{E}}
\newcommand{\ccE}{\overline{\cE}}
\newcommand{\cF}{\mathcal{F}}
\newcommand{\cP}{\mathcal{P}}
\newcommand{\cR}{\mathcal{R}}
\newcommand{\cV}{\mathcal{V}}

\newcommand{\Et}[1]{\mathbb{E}_t \left[#1\right]}
\newcommand{\Etp}[1]{\mathbb{E}_{t - 1} \left[#1\right]}

\newcommand{\abs}[1]{\left|#1\right|}

\newcommand{\condE}[2]{\mathbb{E} \left[#1 \,\middle|\, #2\right]}
\newcommand{\condEsub}[3]{\mathbb{E}_{#3} \! \left[#1 \,\middle|\, #2\right]}

\newcommand{\E}[1]{\mathbb{E} \left[#1\right]}

\newcommand{\floors}[1]{\left\lfloor#1\right\rfloor}
\newcommand{\I}[1]{\mathds{1} \! \left\{#1\right\}}

\newcommand{\rnd}[1]{\bm{#1}}
\newcommand{\set}[1]{\left\{#1\right\}}

\mathchardef\mhyphen="2D

\newcommand{\baseline}{{\tt Baseline}}
\newcommand{\batchrank}{{\tt BatchRank}}
\newcommand{\bubblerank}{{\tt BubbleRank}}

\newcommand{\cascadeklucb}{{\tt CascadeKL\mhyphen UCB}}

\newcommand{\toprank}{{\tt TopRank}}

\hyphenation{Rij-ke}

\usepackage{enumitem}
\setlist[description]{leftmargin=\parindent,labelindent=\parindent}

\title{BubbleRank: Safe Online Learning to Re-Rank via Implicit Click Feedback}

\author{
{\bf Chang Li} \\
University of Amsterdam \\
\texttt{c.li@uva.nl} 
\And
{\bf Branislav Kveton\thanks{This work was done while the author was at Adobe Research.}} \\
Google Research \\
\texttt{bkveton@google.com} 
\And
{\bf Tor Lattimore} \\
Deepmind \\
\texttt{tor.lattimore@gmail.com} 
\And
{\bf Ilya Markov} \\
University of Amsterdam \\
\texttt{i.markov@uva.nl} 
\And
{\bf Maarten de Rijke} \\
University of Amsterdam \\
\texttt{derijke@uva.nl} 
\And
{\bf Csaba Szepesv\'ari} \\
Deepmind \\
\texttt{szepesva@cs.ualberta.ca} 
\And
{\bf Masrour Zoghi} \\
Google Research \\
\texttt{masrour@zoghi.org}
}
\author{\textbf{Chang Li}$^1$\quad 
\textbf{Branislav Kveton}$^2$\quad 
{\bf Tor Lattimore}$^3$\quad 
{\bf Ilya Markov}$^1$\\
{\bf Maarten de Rijke}$^1$ \quad 
{\bf Csaba Szepesv\'ari}$^{3, 4}$\quad
{\bf Masrour Zoghi}$^2$\\
$^1$University of Amsterdam \quad 
$^2$Google Research \quad
$^3$DeepMind\quad
$^4$University of Alberta\\
\texttt{c.li@uva.nl}, \texttt{bkveton@google.com}, \texttt{tor.lattimore@gmail.com}, \texttt{i.markov@uva.nl},\\ \texttt{derijke@uva.nl},  \texttt{szepesva@cs.ualberta.ca}, \texttt{masrour@zoghi.org}}

\begin{document}
\maketitle

\begin{abstract}
In this paper, we study the problem of safe online learning to re-rank, where user feedback is used to improve the quality of displayed lists. 
Learning to rank has traditionally been studied in two settings. In the offline setting, rankers are typically learned from relevance labels created by judges. 
This approach has generally become standard in industrial applications of ranking, such as search. 
However, this approach lacks exploration and thus is limited by the information content of the offline training data. 
In the online setting, an algorithm can experiment with lists and learn from feedback on them in a sequential fashion. 
Bandit algorithms are well-suited for this setting but they tend to learn user preferences from scratch, which results in a high initial cost of exploration. 
This poses an additional challenge of \emph{safe} exploration in ranked lists. 
We propose $\bubblerank$, a bandit algorithm for safe re-ranking that combines the strengths of both the offline and online settings. 
The algorithm starts with an initial base list and improves it online by gradually exchanging higher-ranked less attractive items for lower-ranked more attractive items. 
We prove an upper bound on the $n$-step regret of $\bubblerank$ that degrades gracefully with the quality of the initial base list. 
Our theoretical findings are supported by extensive experiments on a large-scale real-world click dataset.
\end{abstract}


\section{INTRODUCTION}
\label{sec:introduction}

Learning to rank (LTR) is an important problem in many application domains, such as information retrieval, ad placement, and recommender systems \cite{liu2009learning}. More generally, LTR arises in any situation where multiple items, such as web pages, are presented to users. It is particularly relevant when the diversity of users makes it hard to decide which item should be presented to a specific user~\cite{radlinski08learning,Yue2011Linear}.

A traditional approach to LTR is offline learning of rankers from either relevance labels created by judges \cite{qin2010letor} or user interactions \cite{joachims2002optimizing,mcmahan2013ad}. Recent experimental results \cite{clicklambda} shows that such rankers, even in a highly-optimized search engine, can be improved by online LTR with exploration. Exploration is the key component in multi-armed bandit algorithms \cite{auer02finitetime}. Many such algorithms have been proposed recently for online LTR in specific user-behavior models \cite{katariya16dcm,cascadingbandit,lagree16multipleplay}, the so-called \emph{click models} \cite{chuklin2015click}. 
Compared to earlier online LTR algorithms \cite{radlinski08learning}, these click model-based algorithms gain in statistical efficiency while giving up on generality. Empirical results indicate that click model-based algorithms are likely to be beneficial in practice.

Yet, existing algorithms for online LTR in click models are impractical for at least three reasons. First, an actual model of user behavior is typically unknown. 
This problem was initially addressed by \citet{batchrank}. They showed that the list of items in the descending order of relevance is optimal in several click models and proposed $\batchrank$ for learning it. 
Then \citet{lattimore2018toprank} built upon this work and proposed $\toprank$, which is the state-of-the-art online LTR algorithm. 
Second, these algorithms lack safety constraints and explore aggressively by placing potentially irrelevant items at high positions, which may significantly degrade user experience~\cite{Wang:2018}. A third and related problem is that the algorithms are not well suited for so-called \emph{warm start} scnearios~\cite{Vorobev2015Gathering}, where the offline-trained production ranker already generates a good list, which only needs to be safely improved. 
Warm-starting an online LTR algorithm is challenging since existing posterior sampling algorithms, such as Thompson sampling~\cite{thompson-1933}, require item-level priors while only list-level priors are available practically. 

We make the following contributions. First, motivated by the exploration scheme of \citet{Radlinski:2006}, we propose a bandit algorithm for online LTR that addresses all three issues mentioned above. The proposed algorithm gradually improves upon an initial base list by exchanging  higher-ranked less attractive items for lower-ranked more attractive items. The algorithm resembles bubble sort \cite{DBLP:books/daglib/0023376}, and therefore we call it $\bubblerank$. Second, we prove an upper bound on the $n$-step regret of $\bubblerank$. The bound reflects the behavior of $\bubblerank$: worse initial base lists lead to a higher regret.
Third, we define our safety constraint, which is based on incorrectly-ordered item pairs in the ranked list, and prove that $\bubblerank$ never violates this constraint with a high probability. Finally, we evaluate $\bubblerank$ extensively on a large-scale real-world click dataset.


\section{BACKGROUND}
\label{sec:background}

This section introduces our online learning problem. We first review click models \cite{chuklin2015click} and then introduce a stochastic click bandit~\cite{batchrank}, a learning to rank framework for multiple click models.

The following notation is used in the rest of the paper. We denote $\set{1, \dots, n}$ by $[n]$. For any sets $A$ and $B$, we denote by $A^B$ the set of all vectors whose entries are indexed by $B$ and take values from $A$. We use boldface letters to denote random variables.

\subsection{CLICK MODELS}
\label{sec:click models}

A \emph{click model} is a model of how a user clicks on a list of documents. We refer to the documents as \emph{items} and denote the universe of all items by $\cD = [L]$. The user is presented a \emph{ranked list}, an ordered list of $K$ documents out of $L$. We denote this list by $\cR \in \Pi_K(\cD)$, where $\Pi_K(\cD)$ is the set of all $K$-tuples with distinct items from $\cD$. We denote by $\cR(k)$ the item at position $k$ in $\cR$; and by $\cR^{-1}(i)$ the position of item $i$ in $\cR$, if item $i$ is in $\cR$.

Many click models are parameterized by \emph{item-dependent attraction probabilities} $\alpha \in [0, 1]^L$, where $\alpha(i)$ is the \emph{attraction probability} of item $i$. We discuss the two most fundamental click models below.

In the \emph{cascade model} (CM)~\cite{craswell08experimental}, the user scans list $\cR$ from the first item $\cR(1)$ to the last $\cR(K)$. If item $\cR(k)$ is \emph{attractive}, the user \emph{clicks} on it and does not examine the remaining items. If item $\cR(k)$ is not attractive, the user \emph{examines} item $\cR(k + 1)$. The first item $\cR(1)$ is examined with probability one. Therefore, the expected number of clicks is equal to the probability of clicking on any item, and is $r(\cR) = \sum_{k = 1}^K \chi(\cR, k) \alpha(\cR(k))$, where $\chi(\cR, k) = \prod_{i = 1}^{k - 1} (1 - \alpha(\cR(i)))$ is the examination probability of position $k$ in list $\cR$.

In the \emph{position-based model} (PBM)~\cite{richardson07predicting}, the probability of clicking on an item depends on both its identity and position. Therefore, in addition to item-dependent attraction probabilities $\alpha$, the PBM is parameterized by $K$ \emph{position-dependent examination probabilities} $\chi \in [0, 1]^K$, where $\chi(k)$ is the examination probability of position $k$. The user interacts with list $\cR$ as follows. The user \emph{examines} position $k \in [K]$ with probability $\chi(k)$ and then \emph{clicks} on item $\cR(k)$ at that position with probability $\alpha(\cR(k))$. Therefore, the expected number of clicks on list $\cR$ is $r(\cR) = \sum_{k = 1}^K \chi(k) \alpha(\cR(k))$.

CM and PBM are similar models, because the probability of clicking factors into item and position dependent factors. Therefore, both in the CM and PBM, under the assumption that $\chi(1) \geq \dots \geq \chi(K)$, the expected number of clicks is maximized by listing the $K$ most attractive items in descending order of their attraction. More precisely, the most clicked list is
\begin{align}
  \cR^\ast = (1, \dots, K)
  \label{eq:optimal list}
\end{align}
when $\alpha(1) \geq \dots \geq \alpha(L)$. Therefore, perhaps not surprisingly, the problem of learning the optimal list in both models can be viewed as the same problem, a stochastic click bandit \cite{batchrank}.

\subsection{STOCHASTIC CLICK BANDIT}
\label{sec:stochastic click bandit}

An instance of a \emph{stochastic click bandit} \cite{batchrank} is a tuple $(K, L, P_\alpha, P_\chi)$, where $K \leq L$ is the number of positions, $L$ is the number of items, $P_\alpha$ is a distribution over binary attraction vectors $\set{0, 1}^L$, and $P_\chi$ is a distribution over binary examination matrices $\set{0, 1}^{\Pi_K(\cD) \times K}$.

The learning agent interacts with the stochastic click bandit as follows. At time $t$, it chooses a list $\rnd{\cR}_t \in \Pi_K(\cD)$, which depends on its history up to time $t$, and then observes \emph{clicks} $\rnd{c}_t \in \set{0, 1}^K$ on all positions in $\rnd{\cR}_t$. A position is clicked if and only if it is examined and the item at that position is attractive. More specifically, for any $k \in [K]$,
\begin{align}
  \rnd{c}_t(k) = \rnd{X}_t(\rnd{\cR}_t, k) \rnd{A}_t(\rnd{\cR}_t(k)),
  \label{eq:click}
\end{align}
where $\rnd{X}_t \in \set{0, 1}^{\Pi_K(\cD) \times K}$ and $\rnd{X}_t(\cR, k)$ is the \emph{examination indicator} of position $k$ in list $\cR \in \Pi_K(\cD)$ at time $t$; and $\rnd{A}_t \in \set{0, 1}^L$ and $\rnd{A}_t(i)$ is the \emph{attraction indicator} of item $i$ at time $t$. Both $\rnd{A}_t$ and $\rnd{X}_t$ are stochastic and drawn i.i.d. from $P_\alpha \otimes P_\chi$.

The key assumption that allows learning in this model is that the attraction of any item is independent of the examination of its position. In particular, for any list $\cR \in \Pi_K(\cD)$ and position $k \in [K]$,
\begin{align}
  \condE{\rnd{c}_t(k)}{\rnd{\cR}_t = \cR} =
  \chi(\cR, k) \alpha(\cR(k)),
\end{align}
where $\alpha = \E{\rnd{A}_t}$ and $\alpha(i)$ is the \emph{attraction probability} of item $i$; and $\chi = \E{\rnd{X}_t}$ and $\chi(\cR, k)$ is the \emph{examination probability} of position $k$ in $\cR$. Note that the above independence assumption is in expectation only. We do not require that the clicks are independent of the position or other displayed items.

The \emph{expected reward} at time $t$ is the expected number of clicks at time $t$. Based on our independence assumption, $\sum_{k = 1}^K \E{\rnd{c}_t(k)} = r(\rnd{\cR}_t, \alpha, \chi)$, where $r(\cR, A, X) = \sum_{k = 1}^K X(\cR, k) A(\cR(k))$ for any $\cR \in \Pi_K(\cD)$, $A \in [0, 1]^L$, and $X \in [0, 1]^{\Pi_K(\cD) \times K}$. The learning agent maximizes the expected number of clicks in $n$ steps. This problem can be equivalently viewed as minimizing the \emph{expected cumulative regret} in $n$ steps, which we define as
\begin{align}
  \mbox{}\hspace*{-2mm}R(n) {=}
  \sum_{t = 1}^n \E{\max_{\cR \in \Pi_K(\cD)} r(\cR, \alpha, \chi) - r(\rnd{\cR}_t, \alpha, \chi)}
  \hspace*{-.75mm}\mbox{}.
  \label{eq:regret}
\end{align}


\section{ONLINE LEARNING TO RE-RANK}
\label{sec:learning to re-rank}

Multi-stage ranking is widely used in production ranking systems \cite{chen2017efficient,karmakersantu2017application,liu2017cascade}, with the re-ranking stage at the very end \cite{chen2017efficient}. In the re-ranking stage, a relatively small number of items, typically $10$--$20$, are re-ranked. One reason for re-ranking is that offline rankers are typically trained to minimize the average loss across a large number of queries. Therefore, they perform well on very frequent queries and poorly on infrequent queries. On moderately frequent queries, the so-called \emph{torso queries}, their performance varies. As torso queries are sufficiently frequent, an online algorithm can be used to re-rank so as to optimize their value, such as the number of clicks \cite{clicklambda}.

We propose an online algorithm that addresses the above problem and adaptively re-ranks a list of items generated by a production ranker with the goal of placing more attractive items at higher positions. We study a non-contextual variant of the problem, where we re-rank a small number of items in a single query. Generalization across queries and items is an interesting direction for future work. We follow the setting in \cref{sec:stochastic click bandit}, except that $\cD = [K]$. Despite these simplifying assumptions, our learning problem remains a challenge. In particular, the attraction of items is only observed through clicks in \eqref{eq:click}, which are affected by other items in the list.

\subsection{ALGORITHM}
\label{sec:algorithm}

\begin{algorithm}[t]
	\caption{$\bubblerank$}
	\label{alg:bubble rank}
	\begin{algorithmic}[1]
		\STATE \textbf{Input:} initial list $\cR_0$ over $[K]$ 
		\vspace{0.075in}
		\STATE $\forall i, j \in [K]: \rnd{s}_0(i, j) \gets 0, \ \rnd{n}_0(i, j) \gets 0$
		\STATE $\bar{\rnd{\cR}}_1 \gets \cR_0$ \label{alg:3}
		\FOR{$t = 1, \dots, n$}
		\STATE $h \gets t \ \mathrm{mod} \ 2$
		\vspace{0.075in}
		\STATE $\rnd{\cR}_t \gets \bar{\rnd{\cR}}_t$ \label{alg:6}
		\FOR{$k = 1, \dots, \floors{(K - h) / 2}$}
		\STATE $i \gets \rnd{\cR}_t(2 k - 1 + h), \ j \gets \rnd{\cR}_t(2 k + h)$
		\IF{$\rnd{s}_{t - 1}(i, j) \leq 2 \sqrt{\rnd{n}_{t - 1}(i, j) \log(1 / \delta)}$} \label{alg:10}
		\STATE Randomly exchange items $\rnd{\cR}_t(2 k - 1 + h)$ and $\rnd{\cR}_t(2 k + h)$ in list $\rnd{\cR}_t$
		\ENDIF \label{alg:11}
		\ENDFOR
		\vspace{0.075in}
		\STATE Display list $\rnd{\cR}_t$ and observe clicks $\rnd{c}_t \in \{0, 1\}^K$ \label{alg:13}
		\vspace{0.075in}
		\STATE $\rnd{s}_t \gets \rnd{s}_{t - 1}, \ \rnd{n}_t \gets \rnd{n}_{t - 1}$ \label{alg:15}
		\FOR{$k = 1, \dots, \floors{(K - h) / 2}$}
		\STATE $i \gets \rnd{\cR}_t(2 k - 1 + h), \ j \gets \rnd{\cR}_t(2 k + h)$
		\IF{$\abs{\rnd{c}_t(2 k - 1 + h) - \rnd{c}_t(2 k + h)} = 1$} \label{alg:18}
		\STATE $\rnd{s}_t(i, j) \gets \rnd{s}_t(i, j) + \rnd{c}_t(2 k - 1 + h) - \rnd{c}_t(2 k + h)$
		\STATE $\rnd{n}_t(i, j) \gets \rnd{n}_t(i, j) + 1$
		\STATE $\rnd{s}_t(j, i) \gets \rnd{s}_t(j, i) + \rnd{c}_t(2 k + h) - \rnd{c}_t(2 k - 1 + h)$
		\STATE $\rnd{n}_t(j, i) \gets \rnd{n}_t(j, i) + 1$ \label{alg:22}
		\ENDIF
		\ENDFOR
		\vspace{0.075in}
		\STATE $\bar{\rnd{\cR}}_{t + 1} \gets \bar{\rnd{\cR}}_t$ \label{alg:24}
		\FOR{$k = 1, \dots, K - 1$}
		\STATE $i \gets \bar{\rnd{\cR}}_{t + 1}(k), \ j \gets \bar{\rnd{\cR}}_{t + 1}(k + 1)$
		\IF{$\rnd{s}_t(j, i) > 2 \sqrt{\rnd{n}_t(j, i) \log(1 / \delta)}$} \label{alg:27}
		\STATE Exchange items $\bar{\rnd{\cR}}_{t + 1}(k)$ and $\bar{\rnd{\cR}}_{t + 1}(k + 1)$ in list $\bar{\rnd{\cR}}_{t + 1}$ \label{alg:28}
		\ENDIF
		\ENDFOR
		\ENDFOR
	\end{algorithmic}
\end{algorithm}

Our algorithm is presented in \cref{alg:bubble rank}. The algorithm gradually improves upon an \emph{initial base list} $\cR_0$ by ``bubbling up'' more attractive items. Therefore, we refer to it as $\bubblerank$. $\bubblerank$ determines more attractive items by randomly exchanging neighboring items. If the lower-ranked item is found to be more attractive, the items are permanently exchanged and never randomly exchanged again. If the lower-ranked item is found to be less attractive, the items are never randomly exchanged again. We describe $\bubblerank$ in detail below.

$\bubblerank$ maintains a \emph{base list} $\bar{\rnd{\cR}}_t$ at each time $t$. From the viewpoint of $\bubblerank$, this is the best list at time $t$. The list is initialized by the initial base list $\cR_0$ (line~\ref{alg:3}). At time $t$, $\bubblerank$  permutes $\bar{\rnd{\cR}}_t$ into a \emph{displayed list} $\rnd{\cR}_t$ (lines~\ref{alg:6}--\ref{alg:11}). Two kinds of permutations are employed. If $t$ is odd and so $h = 0$, the items at positions $1$ and $2$, $3$ and $4$, and so on, are randomly exchanged. If $t$ is even and so $h = 1$, the items at positions $2$ and $3$, $4$ and $5$, and so on are randomly exchanged. The items are exchanged only if $\bubblerank$ is uncertain regarding which item is more attractive (line~\ref{alg:10}).

The list $\rnd{\cR}_t$ is displayed and $\bubblerank$ gets feedback (line~\ref{alg:13}). Then it updates its statistics (lines~\ref{alg:15}--\ref{alg:22}). For any exchanged items $i$ and $j$, if item $i$ is clicked and item $j$ is not, the belief that $i$ is more attractive than $j$, $\rnd{s}_t(i, j)$, increases; and the belief that $j$ is more attractive than $i$, $\rnd{s}_t(j, i)$, decreases. The number of observations, $\rnd{n}_t(i, j)$ and $\rnd{n}_t(j, i)$, increases. These statistics are updated only if one of the items is clicked (line~\ref{alg:18}), not both.

At the end of time $t$, the base list $\bar{\rnd{\cR}}_t$ is improved (lines~\ref{alg:24}--\ref{alg:28}). More specifically, if any lower-ranked item $j$ is found to be more attractive than its higher-ranked neighbor $i$ (line~\ref{alg:27}), the items are permanently exchanged in the next base list $\bar{\rnd{\cR}}_{t + 1}$.

A notable property of $\bubblerank$ is that it explores safely, since any item in the displayed list $\rnd{\cR}_t$ is at most one position away from its position in the base list $\bar{\rnd{\cR}}_t$. Moreover, any base list improves upon the initial base list $\cR_0$, because it is obtained by bubbling up more attractive items with a high confidence. We make this notion of safety more precise in \cref{sec:safety}.


\section{THEORETICAL ANALYSIS}
\label{sec:analysis}

In this section, we provide theoretical guarantees on the performance of $\bubblerank$, by bounding the $n$-step regret in \eqref{eq:regret}.

The content is organized as follows. In \cref{sec:regret bound}, we present our upper bound on the $n$-step regret of $\bubblerank$, together with our assumptions. In \cref{sec:safety}, we prove that $\bubblerank$ is safe. In \cref{sec:discussion}, we discuss our theoretical results. The regret bound is proved in \cref{sec:upper bound proof}. Our technical lemmas are stated and proved in \cref{sec:technical lemmas}.

\subsection{REGRET BOUND}
\label{sec:regret bound}

Before we present our result, we introduce our assumptions\footnote{Our assumptions are slightly weaker than those of  \citet{batchrank}. For instance, Assumption~\ref{ass:order-free examination} is on the probability of examination. \citet{batchrank} make this assumption on the realization of examination.} and complexity metrics.

\begin{assumption}
\label{sec:assumptions} For any lists $\cR, \cR' \in \Pi_K(\cD)$ and positions $k, \ell \in [K]$ such that $k < \ell$:
\begin{enumerate}[label=A\arabic*.,align=left,leftmargin=*,ref=A\arabic*]
\item \label{ass:optimal list} $r(\cR, \alpha, \chi) \leq r(\cR^\ast, \alpha, \chi)$, where $\cR^\ast$ is defined in \eqref{eq:optimal list};
\item \label{ass:order-free examination} $\set{\cR(1), \dots, \cR(k - 1)} = \set{\cR'(1), \dots, \cR'(k - 1)}$ $\implies \chi(\cR, k) = \chi(\cR', k)$;
\item \label{ass:decreasing examination} $\chi(\cR, k) \geq \chi(\cR, \ell)$;
\item \label{ass:examination scaling} If $\cR$ and $\cR'$ differ only in that the items at positions $k$ and $\ell$ are exchanged, then $\alpha(\cR(k)) \leq \alpha(\cR(\ell)) \iff \chi(\cR, \ell) \geq \chi(\cR', \ell)$; and
\item \label{ass:lowest examination} $\chi(\cR, k) \geq \chi(\cR^\ast, k)$.
\end{enumerate}
\end{assumption}

The above assumptions hold in the CM. In the PBM, they hold when the examination probability decreases with the position.

Our assumptions can be interpreted as follows. Assumption~\ref{ass:optimal list} says that the list of items in the descending order of attraction probabilities is optimal. Assumption~\ref{ass:order-free examination} says that the examination probability of any position depends only on the identities of higher-ranked items. Assumption~\ref{ass:decreasing examination} says that a higher position is at least as examined as a lower position. Assumption~\ref{ass:examination scaling} says that a higher-ranked item is less attractive if and only if it increases the examination of a lower position. Assumption~\ref{ass:lowest examination} says that any position is examined the least in the optimal list.

To simplify our exposition, we assume that $\alpha(1) > \dots > \alpha(K) > 0$. Let $\chi_{\max} = \chi(\cR^\ast, 1)$ denote the \emph{maximum examination probability}, $\chi_{\min} = \chi(\cR^\ast, K)$ denote the \emph{minimum examination probability}, and
\begin{align*}
  \Delta_{\min} =
  \min_{k \in [K - 1]} \alpha(k) - \alpha(k + 1)
\end{align*}
be the \emph{minimum gap}. Then the $n$-step regret of $\bubblerank$ can be bounded as follows.

\begin{theorem}
\label{thm:upper bound} In any stochastic click bandit that satisfies \cref{sec:assumptions}, and for any $\delta \in (0, 1)$, the expected $n$-step regret of $\bubblerank$ is bounded as 
\begin{align*}
  &R(n) \leq \\
  &\quad 180 K \frac{\chi_{\max}}{\chi_{\min}} \frac{K - 1 + 2 \abs{\cV_0}}{\Delta_{\min}} \log(1 / \delta) +
  \delta^\frac{1}{2} K^3 n^2\,.
\end{align*}
\end{theorem}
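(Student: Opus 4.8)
The plan is to follow the two-part template for bandit ranking algorithms: isolate a high-probability \emph{favorable event} on which every confidence judgement made by $\bubblerank$ is correct, bound the regret deterministically on that event, and dump the complementary event into the lower-order term $\delta^{1/2} K^3 n^2$. Everything is organized around the pairwise counters $\rnd{s}_t(i,j)$ and $\rnd{n}_t(i,j)$. The first step is to compute, for two items $a,b$ with $\alpha(a) > \alpha(b)$ compared at adjacent positions $p, p+1$, the expected per-round increment of $\rnd{s}_t(a,b)$ under the coin-flip randomization of line~\ref{alg:10}. Writing $\chi_p$ for the examination of the top position (order-independent by Assumption~\ref{ass:order-free examination}) and $\chi_- \le \chi_+$ for the two possible examinations of the lower position, Assumption~\ref{ass:examination scaling} forces the more attractive item on top to \emph{lower} the examination below, so this drift $d := \tfrac{1}{2}[(\chi_p + \chi_+)\alpha(a) - (\chi_p + \chi_-)\alpha(b)]$ is strictly positive; since $\alpha(a) - \alpha(b) \ge \Delta_{\min}$ for distinct items and $\chi_p \ge \chi_{\min}$ by Assumptions~\ref{ass:decreasing examination} and~\ref{ass:lowest examination}, one gets the floor $d \ge \tfrac{1}{2}\chi_{\min}\Delta_{\min}$. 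A short calculation, using $\chi_p \ge \chi_\pm$ and the optimality in Assumption~\ref{ass:optimal list} (which makes the single-round regret $\rho$ of displaying the pair in the wrong order nonnegative), also yields $0 \le \rho \le 2d$.

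With these pairwise quantities in hand, I would define the favorable event as the intersection, over all ordered pairs and all times, of the events that $\rnd{s}_t(i,j)$ stays inside its martingale (Azuma--Hoeffding) band $\pm 2\sqrt{\rnd{n}_t(i,j)\log(1/\delta)}$ around its mean. On this event two things hold. First, a correctly ordered adjacent pair is never swapped in the base list, because the trigger on line~\ref{alg:27} would require a deviation the band forbids; hence $\bar{\rnd{\cR}}_t$ only ever moves toward $\cR^\ast$ and never away, which is simultaneously the monotonicity and the safety backbone. Second, since the informative observations of $(a,b)$ form a random walk with per-step conditional drift $\mu = d/q$ (where $q$ is the per-round probability that exactly one of the two positions is clicked), the pair stays ``uncertain'' for at most $O(\log(1/\delta)/\mu^2)$ informative observations before it is resolved. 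A union bound over the $O(K^2)$ pairs, the $n$ rounds, and the range of the counters controls the complementary event, and since regret is at most $K$ per round this produces the $\delta^{1/2} K^3 n^2$ summand.

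The total regret on the favorable event decomposes as a sum over pairs of their exploration cost $\tfrac{1}{2}\rho\,T$, where $T$ is the number of rounds the pair is actively exchanged. Converting informative observations back to rounds via the rate $q$ gives $T = O(q\log(1/\delta)/d^2)$, so the cost charged to any one comparison is $O(\rho q \log(1/\delta)/d^2)$; substituting $\rho \le 2d$, $d \ge \tfrac{1}{2}\chi_{\min}\Delta_{\min}$, and $q \le 2\chi_{\max}$ collapses this to $O\!\big(\tfrac{\chi_{\max}}{\chi_{\min}}\tfrac{1}{\Delta_{\min}}\log(1/\delta)\big)$, which is exactly where the ratio $\chi_{\max}/\chi_{\min}$ (numerator from the informativeness rate, denominator from the drift floor) and the $1/\Delta_{\min}$ originate. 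To finish I would count the comparisons that ever become active using the bubble-sort structure: a pair is compared only while $\bar{\rnd{\cR}}_t$ keeps its items adjacent, each permanent swap on line~\ref{alg:28} removes exactly one inversion, and the number of inversions of $\cR_0$ is $\abs{\cV_0}$, so the active comparisons are controlled by the $K-1$ adjacent slots that must be \emph{confirmed} even for an already-sorted list plus the extra comparisons generated as the $\abs{\cV_0}$ inversions bubble into place, giving the $K - 1 + 2\abs{\cV_0}$ factor; the residual factor $K$ and the constant are absorbed from the cruder steps.

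I expect two places to be the genuine obstacles. The first is making the concentration of the pairwise walk \emph{uniform} over the algorithm's adaptive, interleaved schedule: a pair is probed only while adjacent and uncertain, its observations are data-dependent and triggered by clicks, and the odd/even phases and base-list updates interleave, so the bound on $T$ must come from a stopping-time martingale argument rather than a fixed-sample concentration. The second is the passage from the \emph{local} pairwise regret $\rho$ back to the \emph{global} regret $r(\cR^\ast,\alpha,\chi) - r(\rnd{\cR}_t,\alpha,\chi)$ of \eqref{eq:regret}: one telescopes $\rnd{\cR}_t$ to $\cR^\ast$ through adjacent transpositions, and the key simplification — that a transposition at positions $p, p+1$ leaves the examinations of all lower positions unchanged, by Assumption~\ref{ass:order-free examination} — is what keeps this decomposition clean and lets each transposition be charged to a tracked pairwise statistic. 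Verifying that this telescoping is valid for every displayed list, and that each transposition's examination factors stay within $[\chi_{\min},\chi_{\max}]$, is the step requiring the most care.
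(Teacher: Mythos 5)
Your overall architecture coincides with the paper's: a favorable event $\cE$ on which every pairwise statistic $\rnd{s}_t(i,j)$ stays within its Azuma--Hoeffding band (\cref{lem:concentration}), a complement term $\delta^{\frac{1}{2}} K^3 n^2$ obtained from the union bound over $O(K^2)$ pairs and $n$ rounds times the maximal regret $Kn$, monotonicity/safety of the base list on $\cE$, and the bubble-sort count of ever-active pairs $K - 1 + 2\abs{\cV_0}$, which is exactly the paper's \cref{lem:swap bound}. Your per-pair cost accounting, however, takes a different route: you bound the number of rounds $T$ a pair stays randomized (converting informative observations back to rounds through the click rate $q$) and multiply by the per-round regret, which, as you yourself note, requires a stopping-time/Wald argument over an adaptive, click-triggered schedule. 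The paper never counts rounds. It lower-bounds the per-round regret charge by the conditional expected increment of the statistic, $\chi_{\min}(\alpha(i) - \alpha(j)) \leq \Etp{\rnd{s}_t(i,j) - \rnd{s}_{t-1}(i,j)}$, telescopes over $t$, and bounds the terminal value $\rnd{s}_n(i,j) \leq 15 \frac{\alpha(i)+\alpha(j)}{\alpha(i)-\alpha(j)} \log(1/\delta)$ on $\cE$ (\cref{lem:cumulative click loss}); the adaptivity of the schedule is absorbed inside \cref{lem:concentration} by building the martingale on the subsequence of informative observations. This telescoping is the clean resolution of your first obstacle; if you keep the $T$-counting route, you must supply the optional-stopping argument yourself.

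The genuine gap is your second obstacle, which you flag but do not close, and the mechanism you propose for it is not quite right. The passage from pairwise regret to the global regret $r(\cR^\ast, \alpha, \chi) - r(\rnd{\cR}_t, \alpha, \chi)$ cannot be done by telescoping $\rnd{\cR}_t$ to $\cR^\ast$ through adjacent transpositions ``each charged to a tracked pairwise statistic,'' because adjacent inversions of the \emph{displayed} list need not correspond to tracked pairs at all: after the coin flips, items adjacent in $\rnd{\cR}_t$ need not be adjacent in $\bar{\rnd{\cR}}_t$ (base $(1,2,3,4)$ with both pairs flipped displays $(2,1,4,3)$, making $(1,4)$ adjacent --- a pair that is never randomized). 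The paper closes this with two dedicated lemmas: \cref{lem:list regret}, which bounds the regret of any list by $K \chi_{\max} \Delta(\rnd{\cR}_t)$ --- the factor $K$ arises structurally from summing a per-position bound over the $K$ positions (via Assumption~\ref{ass:lowest examination} and a path-of-adjacent-gaps argument), not as slack ``absorbed from the cruder steps'' --- and \cref{lem:off-base gap}, which shows every adjacent inversion of the displayed list is covered by at most three \emph{base-adjacent} randomized pairs, whence the factor $3$ (and, combined with the factor $2$ from the odd/even phases and the $30/\Delta_{\min}$ of \cref{lem:cumulative click loss}, the constant $180$). That covering argument in turn uses two facts on $\cE$ that your sketch never establishes: a base-adjacent pair that is correctly ordered and no longer randomized contributes zero attraction gap, and every incorrectly ordered base-adjacent pair is necessarily still randomized. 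Until a lemma of this type is proved, your claim that the total regret on $\cE$ decomposes as a sum over pairs of $\tfrac{1}{2}\rho T$ is an assertion, not a proof.
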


\subsection{SAFETY}
\label{sec:safety}

Let
\begin{align}
  \mbox{}\hspace*{-4mm}
  \cV(\cR) {=}
  \set{(i, j) \in [K]^2: i < j, \cR^{-1}(i) > \cR^{-1}(j)}
  \hspace*{-3mm}\mbox{}
  \label{eq:safety}
\end{align}
be the set of \emph{incorrectly-ordered item pairs} in list $\cR$. Then our algorithm is safe in the following sense.

\begin{lemma}
\label{lem:safety} Let
\begin{align}
  \cV_0 =
  \cV(\cR_0)
  \label{eq:initial safety}
\end{align}
be the incorrectly-ordered item pairs in the initial base list $\cR_0$. Then the number of incorrectly-ordered item pairs in any displayed list $\rnd{\cR}_t$ is at most $\abs{\cV_0} + K / 2$, that is $\abs{\cV(\rnd{\cR}_t)} \leq \abs{\cV_0} + K / 2$ holds uniformly over time with probability of at least $1 - \delta^\frac{1}{2} K^2 n$.
\end{lemma}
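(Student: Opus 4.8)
The plan is to split the bound into a deterministic comparison between each displayed list and its base list, and a high-probability statement that the base list never has more inversions than $\cR_0$. Writing $\abs{\cV(\cdot)}$ for the number of incorrectly-ordered pairs relative to the optimal order $(1,\dots,K)$, I would prove $\abs{\cV(\rnd{\cR}_t)} \le \abs{\cV(\bar{\rnd{\cR}}_t)} + K/2$ and $\abs{\cV(\bar{\rnd{\cR}}_t)} \le \abs{\cV_0}$ separately; the lemma is their sum. The first bound is purely combinatorial: $\rnd{\cR}_t$ is obtained from $\bar{\rnd{\cR}}_t$ by randomly exchanging items within the disjoint adjacent position-pairs $(2k-1+h,\,2k+h)$, and each such neighboring transposition toggles the order of exactly one item pair, hence changes $\abs{\cV(\cdot)}$ by at most one. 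As there are at most $\floors{(K-h)/2}\le K/2$ such pairs, the bound follows with no probabilistic input.

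For the second bound, observe that the base list changes only through the permanent swaps that move a lower-ranked item $j$ above its neighbor $i$ when $\rnd{s}_t(j,i) > 2\sqrt{\rnd{n}_t(j,i)\log(1/\delta)}$; such a swap removes an inversion exactly when $\alpha(j) > \alpha(i)$ and creates one otherwise. I would define the good event $\cE$ that, for every ordered pair $(b,a)$ with $\alpha(a) > \alpha(b)$ and every $t \le n$, the wrong-direction statistic satisfies $\rnd{s}_t(b,a) \le 2\sqrt{\rnd{n}_t(b,a)\log(1/\delta)}$. On $\cE$ the swap condition can never fire for a correctly-ordered pair, so every permanent swap strictly decreases the inversion count; hence $\abs{\cV(\bar{\rnd{\cR}}_t)}$ is non-increasing in $t$ and bounded above by $\abs{\cV(\bar{\rnd{\cR}}_1)} = \abs{\cV_0}$.

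The crux is to show $P(\cE^c) \le \delta^{1/2} K^2 n$. Fix a pair $(b,a)$ with $\alpha(a) > \alpha(b)$ and follow $\rnd{s}(b,a)$ over the observations in which $a,b$ occupy a common designated position-pair; each update is a $\{-1,+1\}$ increment. The essential fact is that, until $\cE$ first fails for this pair, the conditional mean of the next increment is non-positive. If the pair is randomly exchanged, the two orderings are equiprobable, and combining Assumptions~\ref{ass:order-free examination} and \ref{ass:examination scaling} (the examination of the upper position is fixed, while that of the lower position shifts so that the more attractive item is the more likely single click) makes the averaged drift of $\rnd{s}(b,a)$ non-positive. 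If the pair is not exchanged, then — in the absence of an earlier violation — the algorithm's confidence must point the correct way, so $a$ sits above $b$, and Assumption~\ref{ass:decreasing examination} again yields non-positive drift.

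Because this drift control holds only while $\cE$ has not yet failed, there is a circular dependence between the supermartingale property and the event being bounded; I would break it with a stopping-time argument, halting the increment process at the first violation to obtain a genuine bounded-increment supermartingale and then applying an Azuma-type maximal inequality. With the threshold $2\sqrt{m\log(1/\delta)}$ this gives a per-observation-count failure probability of order $\delta^{1/2}$; summing over the at most $n$ observation counts and the fewer than $K^2$ ordered pairs produces the stated $\delta^{1/2} K^2 n$. The main obstacle is precisely establishing this non-positive-drift property uniformly across the exchanged and non-exchanged regimes while correctly handling its circular dependence on $\cE$ itself.
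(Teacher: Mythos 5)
Your proof is correct and follows essentially the same route as the paper's: the same two-part split into a deterministic bound (the displayed list differs from its base list only within at most $\floors{K/2}$ disjoint adjacent position pairs, each toggling one item pair) and a high-probability statement that permanent base-list swaps can only remove inversions, with the latter reduced to concentration of the statistics $\rnd{s}_t$ via drift bounds and an Azuma-type inequality; this is exactly the structure of the paper's proof through the event $\cE$ of \cref{lem:concentration} and the drift bounds of \cref{lem:click loss}. Two differences are worth recording. First, you only need the ``wrong-direction'' half of the concentration event (the paper's $\cE_{t,2}$); the paper invokes its full event $\cE$, which is harmless since that event is needed for the regret bound anyway. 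Second, and more substantively, your stopping-time device addresses a subtlety that the paper's write-up leaves implicit: the statistics $\rnd{s}_t(i,j)$ are updated whenever the pair occupies a designated position pair, even when no coin is flipped, and in the regime ``not exchanged, with the less attractive item on top'' the drift of the wrong-direction statistic can be \emph{positive} (there the upper position is more examined by Assumption~\ref{ass:decreasing examination}, so $\chi_b \alpha(b) - \chi_a \alpha(a)$ need not be negative). That regime can only be entered after the confidence test has already fired the wrong way, i.e.\ after the event has been violated for that pair---precisely the circularity you identify---so halting the compensated process at the first violation is the clean way to obtain a genuine supermartingale before applying Azuma. The paper's \cref{lem:click loss} states its drift bounds only ``on the event that $i$ and $j$ are subject to randomization,'' and \cref{lem:concentration} then applies them to every observation of the pair without comment; your argument supplies the missing justification for that step.
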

\begin{proof}
Our claim follows from two observations. First, by the design of $\bubblerank$, any displayed list $\rnd{\cR}_t$ contains at most $K / 2$ item pairs that are ordered differently from its base list $\bar{\rnd{\cR}}_t$. Second, no base list $\bar{\rnd{\cR}}_t$ contains more incorrectly-ordered item pairs than $\cR_0$ with a high probability. In particular, under event $\cE$ in \cref{lem:concentration}, any change in the base list (line \ref{alg:28} of $\bubblerank$) reduces the number of incorrectly-order item pairs by one. In \cref{lem:concentration}, we prove that $\mathbb{P}(\cE) \geq 1 - \delta^\frac{1}{2} K^2 n$.
\end{proof}

\subsection{DISCUSSION}
\label{sec:discussion}

Our upper bound on the $n$-step regret of $\bubblerank$ (\cref{thm:upper bound}) is $O(\Delta_{\min}^{-1} \log n)$ for $\delta = n^{-4}$. This dependence is considered to be optimal in gap-dependent bounds. Our gap $\Delta_{\min}$ is the minimum difference in the attraction probabilities of items, and reflects the hardness of sorting the items by their attraction probabilities. This sorting problem is equivalent to the problem of learning $\cR^\ast$. So, a gap like $\Delta_{\min}$ is expected, and is the same as that in \citet{batchrank}.

Our regret bound is notable because it reflects two key characteristics of $\bubblerank$. First, the bound is linear in the number of incorrectly-ordered item pairs in the initial base list $\cR_0$. This suggests that $\bubblerank$ should have lower regret when initialized with a better list of items. We validate this dependence empirically in \cref{sec:experiments}. In many domains, such lists exist and are produced by existing ranking policies. They only need to be safely improved.

Second, the bound is $O(\chi_{\max} \chi_{\min}^{-1})$, where $\chi_{\max}$ and $\chi_{\min}$ are the maximum and minimum examination probabilities, respectively. In \cref{sec:sanity check on theorem}, we show that this dependence can be observed in problems where most attractive items are placed at infrequently examined positions. This limitation is intrinsic to $\bubblerank$, because attractive lower-ranked items cannot be placed at higher positions unless they are observed to be attractive at lower, potentially infrequently examined, positions.

The safety constraint of $\bubblerank$ is stated in \cref{lem:safety}. For $\delta = n^{-4}$, as discussed above, $\bubblerank$ becomes a rather safe algorithm, and is unlikely to display any list with more than $K / 2$ incorrectly-ordered item pairs than the initial base list $\cR_0$. More precisely, $\abs{\cV(\rnd{\cR}_t)} \leq \abs{\cV_0} + K / 2$ holds uniformly over time with probability of at least $1 - K^2 / n$. This safety feature of $\bubblerank$ is confirmed by our experiments in \cref{sec:safety results}.

The above discussion assumes that the time horizon $n$ is known. However, in practice, this is not always possible. We can extend $\bubblerank$ to the setting of an unknown time horizon by using the so-called \emph{doubling trick}~\citep[Section 2.3]{cesa-bianchi-2006-prediction}. Let $n$ be the estimated horizon. Then at time $n + 1$, $\bar{\mathcal{R}}_{n + 1}$ is set to $\mathcal{R}_0$ and $n$ is doubled. The statistics do not need to be reset.

$\bubblerank$  is computationally efficient. The time complexity of $\bubblerank$ is linear in the number of time steps and in each step $O(K)$ operations are required.

In this paper, we focus on re-ranking. But $\bubblerank$ can be extended to the full ranking problem as follows. Define $s_t(i, j)$ and $n_t(i, j)$ for all item pairs $(i, j)$. For even (odd) $K$ at odd (even) time steps, select a random item below position $K$ that has not been shown to be worse than the item at position $K$, and swap these items with probability $0.5$. The item that is not displayed gets feedback $0$. The rest of $\bubblerank$ remains the same. This algorithm can be analyzed in the same way as $\bubblerank$. 

\subsection{PROOF OF THEOREM~\ref{thm:upper bound}}
\label{sec:upper bound proof}

In \cref{lem:concentration}, we establish that there exists a favorable event $\cE$ that holds with probability $1 - \delta^\frac{1}{2} K^2 n$, when all beliefs $\rnd{s}_t(i, j)$ are at most $2 \sqrt{\rnd{n}_t(i, j) \log(1 / \delta)}$ from their respective means, uniformly for $i < j$ and $t \in [n]$. Since the maximum $n$-step regret is $K n$, we get that
\begin{align*}
  R(n) \leq
  \E{\hat{R}(n) \I{\cE}} + \delta^\frac{1}{2} K^3 n^2\,,
\end{align*}
where $\hat{R}(n) = \sum_{t = 1}^n r(\cR^\ast, \alpha, \chi) - r(\rnd{\cR}_t, \alpha, \chi)$. We bound $\hat{R}(n)$ next. For this, let
\begin{align*}
  \rnd{\cP}_t =
  \big\{(i, j) \in [K]^2: \,  i < j, \, \abs{\bar{\rnd{\cR}}_t^{-1}(i) - \bar{\rnd{\cR}}_t^{-1}(j)} = 1,& \\
   \rnd{s}_{t - 1}(i, j) \leq 2 \sqrt{\rnd{n}_{t - 1}(i, j) \log(1 / \delta)}\big\}&
\end{align*}
be the set of potentially randomized item pairs at time $t$. Then, by \cref{lem:off-base regret} on event $\cE$, which bounds the regret of list $\rnd{\cR}_t$ with the difference in the attraction probabilities of items $(i, j) \in \rnd{\cP}_t$, we have that
\begin{align*}
  &\hat{R}(n) \leq \\
  &~\quad 3 K \chi_{\max} \sum_{i = 1}^K \sum_{j = i + 1}^K
  \sum_{t = 1}^n (\alpha(i) - \alpha(j)) \I{(i, j) \in \rnd{\cP}_t}.
\end{align*}
Now note that for any randomized $(i, j) \in \rnd{\cP}_t$ at time $t$,
\begin{align*}
  \chi_{\min} (\alpha(i) - \alpha(j)) &\leq
  \Etp{\rnd{c}_t(\rnd{\cR}_t^{-1}(i)) - \rnd{c}_t(\rnd{\cR}_t^{-1}(j))} \\
  &=
  \Etp{\rnd{s}_t(i, j) - \rnd{s}_{t - 1}(i, j)},
\end{align*}
where $\Etp{\cdot}$ is the expectation conditioned on the history up to time $t$, $\rnd{\cR}_1, \rnd{c}_1, \ldots, \rnd{\cR}_{t - 1}, \rnd{c}_{t - 1}$. The inequality is from $\alpha(i) \geq \alpha(j)$, and Assumptions~\ref{ass:order-free examination} and \ref{ass:examination scaling}. The above two inequalities yield
\begin{align*}
  \hat{R}(n)
  & \leq 6 K \frac{\chi_{\max}}{\chi_{\min}} \sum_{i = 1}^K \sum_{j = i + 1}^K
  \sum_{t = 1}^n \\
  &\quad\qquad \Etp{\rnd{s}_t(i, j) - \rnd{s}_{t - 1}(i, j)} \I{(i, j) \in \rnd{\cP}_t} \\
  & \leq 6 K \frac{\chi_{\max}}{\chi_{\min}} \sum_{i = 1}^K \sum_{j = i + 1}^K
  \I{\exists t \in [n]: (i, j) \in \rnd{\cP}_t} \times \\
  & \quad\qquad \sum_{t = 1}^n \Etp{\rnd{s}_t(i, j) - \rnd{s}_{t - 1}(i, j)},
\end{align*} 
where the extra factor of two is because $\bubblerank$ randomizes any pair of items $(i, j) \in \rnd{\cP}_t$ at least once in any two consecutive steps. Moreover, for any $i < j$ on event $\cE$,
\begin{align*}
  &\sum_{t = 1}^n (\rnd{s}_t(i, j) - \rnd{s}_{t - 1}(i, j)) =
  \rnd{s}_n(i, j)\\
  &~\quad \leq 15 \frac{\alpha(i) + \alpha(j)}{\alpha(i) - \alpha(j)} \log(1 / \delta) 
 \leq \frac{30}{\Delta_{\min}} \log(1 / \delta).
\end{align*}
The first inequality is by \cref{lem:cumulative click loss}, which establishes that the maximum difference in clicks of any randomized pair of items is bounded. After that, the better item is found and the pair of items is never randomized again. The last inequality is by $\alpha(i) + \alpha(j) \leq 2$ and $\alpha(i) - \alpha(j) \geq \Delta_{\min}$. Now we chain the above two inequalities and get that
\begin{align*}
  \hat{R}(n) \leq{}
  &180 K \frac{\chi_{\max}}{\chi_{\min}} \frac{1}{\Delta_{\min}} \log(1 / \delta) \times \\
  &\sum_{i = 1}^K \sum_{j = i + 1}^K \I{\exists t \in [n]: (i, j) \in \rnd{\cP}_t}.
\end{align*}
Finally, let $\rnd{\cP} = \bigcup_{t \in [n]} \rnd{\cP}_t$. Then, on event $\cE$, $\abs{\rnd{\cP}} \leq K - 1 + 2 \abs{\cV_0}$. This follows from the design of $\bubblerank$ (\cref{lem:swap bound}) and completes the proof. \qed


\section{EXPERIMENTAL RESULTS}
\label{sec:experiments}

We conduct four experiments to evaluate $\bubblerank$. 
In \cref{sec:experimental setup}, we describe our experimental setup. In \cref{sec:results with regret}, we report the regret of compared algorithms, which measures the rate of convergence to the optimal list in hindsight. In \cref{sec:safety results}, we validate the safety of $\bubblerank$. In \cref{sec:sanity check on theorem}, we validate the tightness of the regret bound in \cref{thm:upper bound}. 
Due to space limitations, we report the \emph{Normalized Discounted Cumulative Gain}~(NDCG)
of compared algorithms, which measures the quality of displayed lists, in~\cref{sec:results with ndcg}.

\begin{figure*}[th]
	\includegraphics{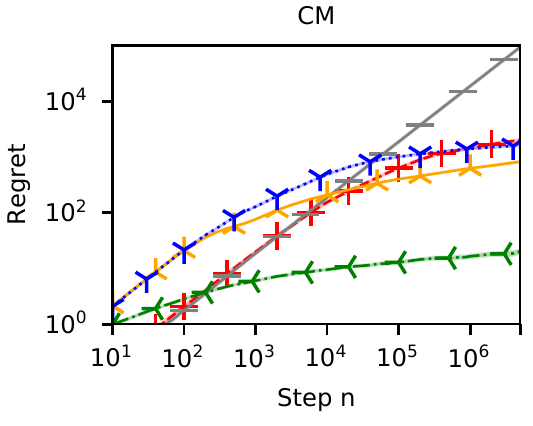}
	\includegraphics{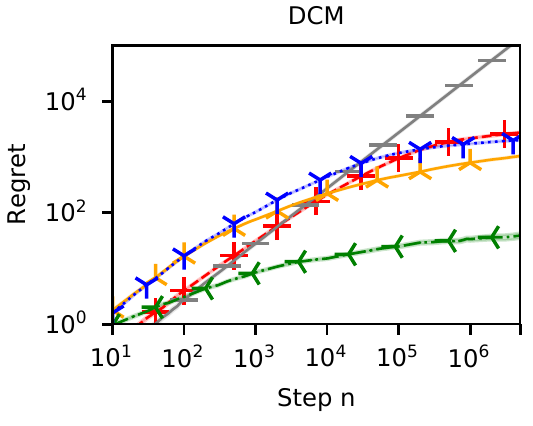}
	\includegraphics{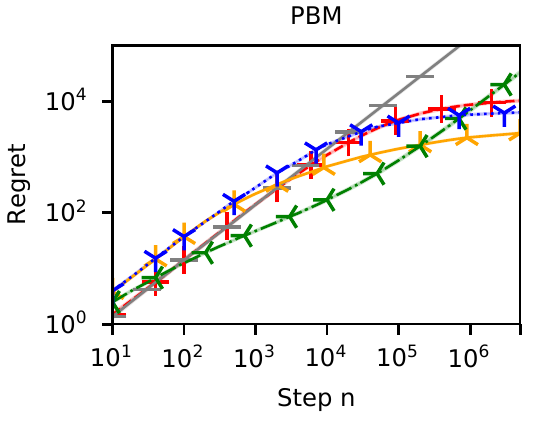}
	\caption{The $n$-step regret of $\bubblerank$ (red), $\cascadeklucb$ (green), $\batchrank$ (blue), $\toprank$ (orange), and $\baseline$ (grey) in the CM, DCM, and PBM in up to $5$ million steps. Lower is better. The results are averaged over all $100$ queries and $10$ runs per query.  The shaded regions represent standard errors of our estimates.}
	\label{fig:all queries}
\end{figure*}

\begin{figure*}[th]
	\includegraphics{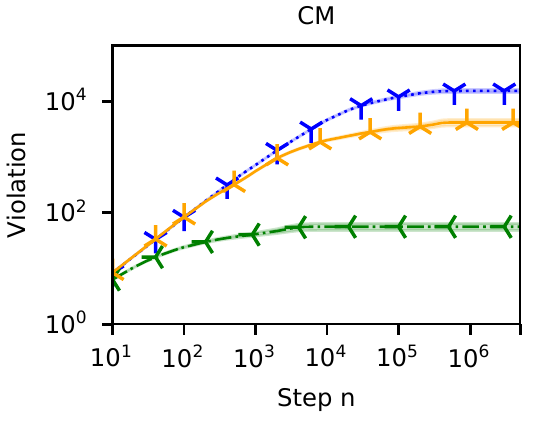}	
	\includegraphics{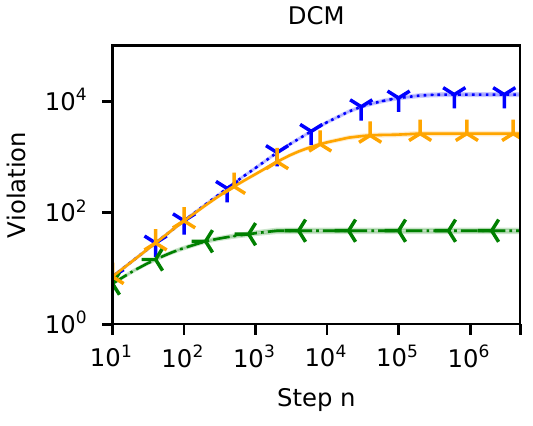}
	\includegraphics{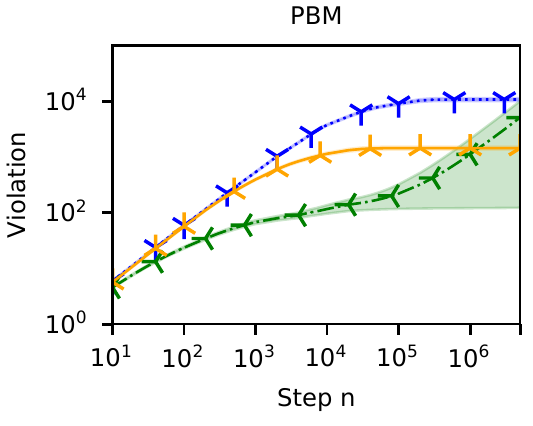}
	\caption{The $n$-step violation of the safety constraint of $\bubblerank$ by $\cascadeklucb$ (green), $\batchrank$ (blue), and $\toprank$ (orange) in the CM, DCM, and PBM in up to $5$ million steps. Lower is better. The shaded regions represent standard errors of our estimates.}
	\label{fig:violations}
\end{figure*}

\subsection{EXPERIMENTAL SETUP}
\label{sec:experimental setup}

We evaluate $\bubblerank$ on the \emph{Yandex} click  dataset.\footnote{\url{https://academy.yandex.ru/events/data_analysis/relpred2011}} The dataset contains user search sessions from the log of the \emph{Yandex} search engine. It is the largest publicly available dataset containing user clicks, with more than $30$ million search sessions. Each session contains at least one search query together with $10$ ranked items.

We preprocess the dataset as in \citep{batchrank}. In particular, we randomly select $100$ frequent search queries, and then learn the parameters of three click models using the PyClick\footnote{\url{https://github.com/markovi/PyClick}} package: CM and PBM, described in \cref{sec:click models}, as well as the \emph{dependent click model (DCM)}~\cite{guo09efficient}. 

The DCM is an extension of the CM~\cite{craswell08experimental} where each position $k$ is associated with an abandonment probability $v(k)$. When the user clicks on an item at position $k$, the user stops scanning the list with probability $v(k)$. Therefore, the DCM can model multiple clicks. Following the work in \cite{katariya16dcm}, we incorporate abandonment into our definition of reward for DCM and define it as the number of abandonment clicks. 
The \emph{abandonment click} is a click after which a user stops browsing the list, and each time step 
contains at most one abandonment click. 
So, the expected reward for DCM equals the probability of abandonment clicks, which is computed as follows: 
\begin{align*}
r(\cR, \alpha, \chi) & = \sum_{k = 1}^K \chi(\cR, k) v(k) \alpha(\cR(k)), \\
\chi(\cR, k) &= \prod_{i = 1}^{k - 1} (1 - v(i)) \alpha(\cR(i))
\end{align*}
is the examination probability of position $k$ in list $\cR$.  A high reward means a user stops the search session because of clicking on an item with high attraction probability.

The learned CM, DCM, and PBM  are used to simulate user click feedback. We experiment with multiple click models to show the robustness of $\bubblerank$ to multiple models of user feedback.

For each query, we choose $10$ items. The number of positions is equal to the number of items, $K = L = 10$. The objective of our re-ranking problem is to place $5$ most attractive items in the descending order of their attractiveness at the $5$ highest positions, as in \citep{batchrank}. The performance of $\bubblerank$ and our baselines is also measured only at the top $5$ positions.

$\bubblerank$ is compared to three baselines $\tt Cascade\mhyphen$ $\tt KL\mhyphen UCB$~\cite{cascadingbandit}, $\batchrank$~\cite{batchrank}, and $\toprank$~\cite{lattimore2018toprank}. The former is near optimal in the CM~\cite{cascadingbandit}, but can have linear regret in other click models. Note that linear regret arises when $\cascadeklucb$ erroneously converges to a suboptimal ranked list. $\batchrank$ and $\toprank$ can learn the optimal list $\cR^\ast$ in a wide range of click models, including the CM, DCM, and PBM. However, they can perform poorly in early stages of learning because they randomly shuffles displayed lists to average out the position bias. All experiments are run for $5$ million steps, after which at least two algorithms converge to the optimal ranked list. 

In the \emph{Yandex} dataset, each query is associated with many different ranked lists, due to the presence of various personalization features of the production ranker. We take the most frequent ranked list for each query as the initial base list $\cR_0$ in $\bubblerank$, since we assume that the most frequent ranked list is what the production ranker would produce in the absence of any personalization. We also compare $\bubblerank$ to a production baseline, called $\baseline$, where the initial list $\cR_0$ is applied for $n$ steps. 

\subsection{RESULTS WITH REGRET}
\label{sec:results with regret}

In the first experiment, we compare $\bubblerank$ to $\cascadeklucb$, $\batchrank$, and $\toprank$ in the CM, DCM, and PBM of all $100$ queries. 
Among them, $\toprank$ is the state-of-the-art online LTR algorithm in multiple click models. 
We evaluate these algorithms by their cumulative regret, which is defined in \eqref{eq:regret}, at the top $5$ positions. The regret,  a measure of convergence,  is a widely-used metric in the bandit literature~\cite{auer02finitetime,katariya16dcm,cascadingbandit,batchrank}. In the CM and PBM, the regret is the cumulative loss in clicks when a sequence of learned lists is compared to the optimal list in hindsight. In the DCM, the regret is the cumulative loss in abandonment clicks. We also report the regret of $\baseline$. 

Our results are reported in \cref{fig:all queries}. 
We observe that the regret of  $\baseline$ grows linearly with time $n$, which means that it is not optimal on average. 
$\cascadeklucb$ learns $\cR^\ast$ quickly in both the CM and DCM, but has linear regret in the PBM. 
This is expected since  $\cascadeklucb$ is designed for the CM, and the DCM is an extension of the CM. 
As for the PBM, which is beyond the modeling assumptions of $\cascadeklucb$, there is no guarantee on the performance of $\cascadeklucb$.  
$\bubblerank$, $\batchrank$, and $\toprank$ can learn in all three click models. 
Compared to $\batchrank$ and $\toprank$, $\bubblerank$ has a higher regret in $5$ million steps. 
However, in earlier steps, $\bubblerank$ has a lower regret than $\batchrank$ and $\toprank$, as it takes advantage of the initial base list $\cR_0$. 
In general, these results show that $\bubblerank$ converges to the optimal list slower than $\batchrank$ and $\toprank$. This is expected because $\bubblerank$ is designed to be a safe algorithm, and only learns better lists by exchanging neighboring items in the base list. 

\subsection{SAFETY RESULTS}
\label{sec:safety results}

\begin{figure*}[t]
	\includegraphics{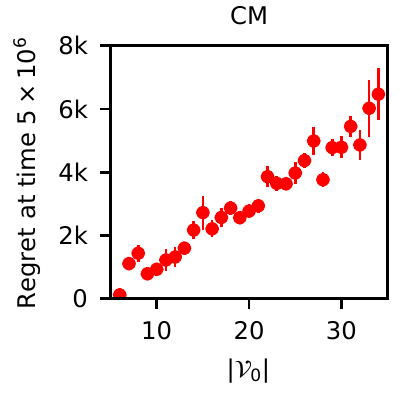}
	\includegraphics{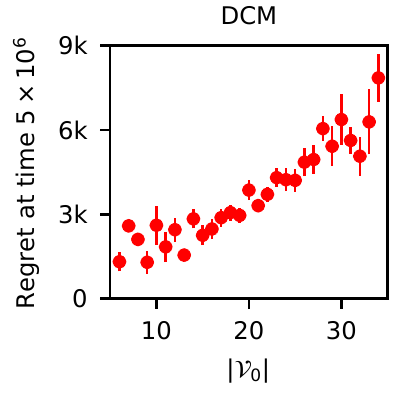}
	\includegraphics{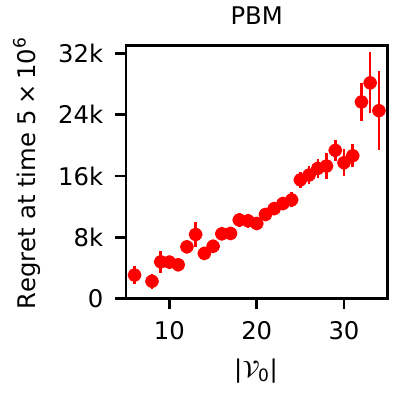}
	\includegraphics{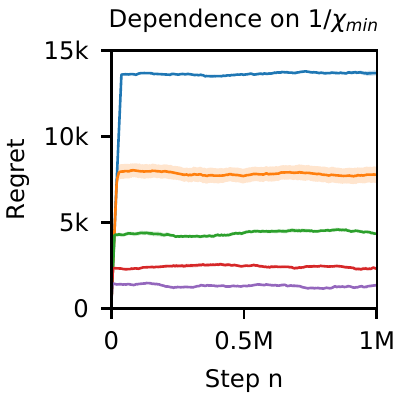}
	\caption{Regret of $\bubblerank$ as a function of the number of  incorrectly-ordered item pairs 
	$\abs{\cV_0}$, and the minimal examination probability  $\chi_{\min}$. In the right plot, the purple, red, green, orange, and blue colors represent $\chi_{\min}$ equals $0.5$, $0.5^2$, $0.5^3$, $0.5^4$, and $0.5^5$, respectively. The shaded regions represent standard errors of our estimates. }
	\label{fig:initialization}
\end{figure*}

In the previous experiment, we show that $\bubblerank$ does not learn as fast as $\cascadeklucb$, $\batchrank$, and $\toprank$ because it operates under the additional safety constraint in \cref{lem:safety}. The constraint is that $\bubblerank$ is unlikely to display any list with more than $K / 2$ incorrectly-ordered item pairs than the initial base list $\cR_0$. More precisely, $\abs{\cV(\rnd{\cR}_t)} \leq \abs{\cV(\cR_0)} + K / 2$ holds uniformly over time with probability of at least $1 - K^2 / n$ for $\delta = n^{-4}$ (\cref{sec:discussion}), where $\cV$ is defined in \eqref{eq:safety}. In the second experiment, we answer the question how often do $\bubblerank$, $\cascadeklucb$,  $\batchrank$, and $\toprank$ 
violate this constraint empirically.
We define the \emph{safety constraint violation in $n$ steps} as
\begin{align}
  V(n) =
  \sum_{t = 1}^{n} \I{\abs{\cV(\rnd{\cR}_t)} > \abs{\cV(\cR_0)} + K / 2},
  \label{eq:constraint violation}
\end{align}
where $\rnd{\cR}_t$ is the displayed list at time $t$. 

We report the $n$-step safety constraint violation of $\cascadeklucb$, $\batchrank$, and $\toprank$ in \cref{fig:violations}. We do not include results of $\bubblerank$ since $\bubblerank$ never violates the constraint in our experiments. We observe that the safety constraint violations of $\cascadeklucb$ in the first $100$ steps are $24.12 \pm 0.76$, $ 23.33 \pm 0.90$, and $23.63 \pm 0.96$ in the CM, DCM, and PBM, respectively. Translating this to a search scenario, $\cascadeklucb$ may show unsafe results, which are significantly worse than the initial base list $\cR_0$, and may hurt user experience, more than $20\%$ of search sessions in the first $100$ steps. Even worse, the violations of $\cascadeklucb$ grow linearly with time in the PBM. The safety issues of $\batchrank$, and $\toprank$ are more severe than that of $\cascadeklucb$.  More precisely, the violations of $\batchrank$ in the first $100$ steps are $83.01 \pm 0.56$, $71.89 \pm0.92$, and $59.63 \pm 0.98$ in the CM, DCM, and PBM, respectively.
And the violations of $\toprank$ are $83.56 \pm 0.70$, $71.47 \pm1.07$, and $57.00 \pm 1.24$ in the CM, DCM, and PBM, respectively. 
Note that the performance of $\toprank$ is close to that of $\batchrank$ in the first $100$ steps since they both require the ranked lists to be randomly shuffled during the initial stages. 
Thus, $\batchrank$, and $\toprank$  would frequently hurt the user experience  during the early stages of learning.

To conclude, $\bubblerank$ learns without violating its safety constraint, while $\cascadeklucb$, $\batchrank$, and $\toprank$  violate the constraint frequently. 
Together with results in~\cref{sec:results with regret}, $\bubblerank$ is a safe algorithm but, to satisfy the safety constraint, it compromises the performance and learns slower than $\batchrank$ and $\toprank$.  
In~\cref{sec:results with ndcg}, we compare $\bubblerank$ to baselines in NDCG and  show that $\bubblerank$ converges to the optimal lists in hindsight.

\subsection{SANITY CHECK ON THE REGRET BOUND}
\label{sec:sanity check on theorem}

We prove an upper bound on the $n$-step regret of $\bubblerank$ in \cref{thm:upper bound}. In comparison to the upper bounds of $\cascadeklucb$ \cite{cascadingbandit} and $\batchrank$ \cite{batchrank}, we have two new problem-specific constants: $\abs{\cV_0}$ and $1 / \chi_{\min}$. In this section, we show that these constants are intrinsic to the behavior of $\bubblerank$.

We first study how the number of incorrectly-ordered item pairs in the initial base list $\cR_0$, $\abs{\cV_0}$, impacts the regret of $\bubblerank$. We choose $10$ random initial base lists $\cV_0$ in each of our $100$ queries and plot the regret of $\bubblerank$ as a function of $\abs{\cV_0}$. Our results are shown in \cref{fig:initialization}. We observe that the regret of $\bubblerank$ is linear in $\abs{\cV_0}$ in the CM, DCM, and PBM. This is the same dependence as in our regret bound (\cref{thm:upper bound}). 

We then study the impact of the minimum examination probability $\chi_{\min}$ on the regret of $\bubblerank$. We experiment with a synthetic PBM with $10$ items, which is parameterized by $\alpha = (0.9, 0.5, \dots, 0.5)$ and $\chi = (0.9, \dots, 0.9, 0.5^i, 0.5^i)$ for $i \geq 1$. The most attractive item is placed at the last position in $\cR_0$, $\cR_0 = (2, \dots, K - 1, 1)$. Since this position is examined with probability $0.5^i$, we expect the regret to double when $i$ increases by one. We experiment with $i \in [5]$ in \cref{fig:initialization} and observe this trend in $1$ million steps. This confirms that the dependence on $1 / \chi_{\min}$ in \cref{thm:upper bound} is generally unavoidable.


\section{RELATED WORK}
\label{sec:related work}

%
%
%

Online LTR via click feedback has been mainly studied in two approaches: under specific click models~\cite{combes15learning,katariya16dcm,cascadingbandit,kveton15combinatorial,lagree16multipleplay,zong16cascading}; or without a particular assumption on click models~\cite{lattimore2018toprank,radlinski08learning,slivkins13ranked,batchrank}. 
Algorithms from the first group efficiently learn optimal rankings in the their considered click models but do not have guarantees beyond their specific click models. 
Algorithms from the second group, on the other hand, learn the optimal rankings in a broader class of click models. 
$\toprank$~\cite{lattimore2018toprank} is the state-of-the-art of the second group, which has the regret of $O(K^2\log(n))$ in our re-ranking setup, that is $L=K$. 
$\bubblerank$ also belongs to the second group and the regret of $\bubblerank$ is comparable to that of $\toprank$ given a good initial list, when $| \cV_0 | = O(K)$. 
However, unlike $\bubblerank$, $\toprank$ and all the previous algorithms do not consider safety. 
They explore aggressively in the initial steps and may display irrelevant items at high positions, which may then hurt user experiences~\cite{Wang:2018}.  

Our safety problem is related to the warm start problem~\cite{strehl2010learning}. 
Contextual bandits~\cite{agarwal2014taming,li2010contextual,Moon:2010} 
deal with a broader class of models than we do and are used to address the warm start problem. But they are limited to small action sets, and thus unsuitable for the ranking setup that we consider in this paper.

The warm start LTR has been studied in multiple papers~\cite{hofmann2013reusing,Vorobev2015Gathering,yanpractical}, where the goal is to use an online algorithm to fine tune the results generated by an offline-trained ranker. 
In these papers, different methods for learning prior distributions of Thompson sampling based online LTR algorithms from offline datasets have been proposed. 
However, these methods have the following drawbacks. 
First, the offline data may not well align with user preferences~\cite{clicklambda}, which may result in a biased prior assumption. 
Second, grid search with online A/B tests may alleviate this and find a proper prior assumption~\cite{Vorobev2015Gathering}, but the online A/B test requires additional costs in terms of user experience. 
Third,  there is no safety constraint in these methods. Even with carefully picked priors, they may recommend irrelevant items to users, e.g., new items with little prior knowledge. 
In contrast, $\bubblerank$ starts from the production ranked list and learns under the safety constraint. Thus, $\bubblerank$ gets rid of these drawbacks.  

Another related line of work are conservative bandits~\cite{kazerouni2017conservative,wu2016conservative}. In conservative bandits, the learned policy is safe in the sense that its expected \emph{cumulative} reward is at least $1 - \alpha$ fraction of that of the baseline policy with high probability. This notion of safety is less stringent than that in our work (\cref{sec:safety}). In particular, our notion of safety is \emph{per-step}, in the sense that any displayed list is only slightly worse than the initial base list with a high probability. We do not compare to conservative bandits in our experiments because existing algorithms for conservative bandits require the action space to be small. The actions in our problem are ranked lists, and their number is exponential in $K$.



\section{CONCLUSIONS}
\label{sec:conclusions}

In this paper, we fill a gap in the LTR literature by proposing $\bubblerank$, a re-ranking algorithm that gradually improves an initial base list, which we assume to be provided by an offline LTR approach. The improvements are learned from small perturbations of base lists, which are unlikely to degrade the user experience greatly. We prove a gap-dependent upper bound on the regret of $\bubblerank$ and evaluate it on a large-scale click dataset from a commercial search engine.

We leave open several questions of interest. For instance, our paper studies $\bubblerank$ in the setting of re-ranking. 
Although we explain an approach of extending $\bubblerank$ to the general ranking setup in \cref{sec:discussion}, we expect further experiments to validate this approach. 
Our general topic of interest are exploration schemes that are more conservative than those of existing online LTR methods. Existing methods are not very practical because they can explore highly irrelevant items at frequently examined positions.

\subsection*{ACKNOWLEDGMENTS}
We thank our reviewers for helpful feedback and suggestions.
This research was supported by
Ahold Delhaize,
the Association of Universities in the Netherlands (VSNU),
the Innovation Center for Artificial Intelligence (ICAI),
and
the Netherlands Organisation for Scientific Research (NWO)
under pro\-ject nr.\ 612.\-001.\-551.
All content represents the opinion of the authors, which is not necessarily shared or endorsed by their respective employers and/or sponsors.

\bibliographystyle{abbrvnat}
\bibliography{Bibliography} 


\clearpage
\onecolumn
\appendix

\section{LEMMAS}
\label{sec:technical lemmas}

\begin{lemma}
\label{lem:list regret} Let $\cR$ be any list over $[K]$. Let
\begin{align}
  \Delta(\cR) =
  \sum_{k = 1}^{K - 1} \I{\alpha(\cR(k + 1)) - \alpha(\cR(k)) > 0} \times
  (\alpha(\cR(k + 1)) - \alpha(\cR(k)))
  \label{eq:attraction gap} 
\end{align}
be the \emph{attraction gap} of list $\cR$. Then the expected regret of $\cR$ is bounded as
\begin{align*}
  \sum_{k = 1}^K (\chi(\cR^\ast, k) \alpha(k) - \chi(\cR, k) \alpha(\cR(k))) \leq
  K \chi_{\max} \Delta(\cR)\,.
\end{align*}
\end{lemma}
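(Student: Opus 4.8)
The plan is to recognize the left-hand side as the regret $r(\cR^\ast,\alpha,\chi) - r(\cR,\alpha,\chi)$ of list $\cR$, since $\cR^\ast(k)=k$, and to control it in three moves. Throughout I write $\beta_k = \alpha(\cR(k))$ for the attraction sitting at position $k$, so the target inequality is $\sum_{k=1}^K \chi(\cR^\ast,k)\alpha(k) - \sum_{k=1}^K \chi(\cR,k)\beta_k \le K\chi_{\max}\Delta(\cR)$. The first two moves dispose of the examination factors, reducing everything to a statement about the attraction values alone; the third, and the real content, converts that statement into a bound by the local adjacent gaps that define $\Delta(\cR)$.

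First I would remove the examination mismatch using Assumption~\ref{ass:lowest examination}. Adding and subtracting $\sum_k \chi(\cR,k)\alpha(k)$ splits the regret into $\sum_k [\chi(\cR^\ast,k)-\chi(\cR,k)]\alpha(k) + \sum_k \chi(\cR,k)(\alpha(k)-\beta_k)$. By Assumption~\ref{ass:lowest examination} we have $\chi(\cR,k)\ge\chi(\cR^\ast,k)$, so the first sum is nonpositive and may be discarded, leaving $r(\cR^\ast)-r(\cR) \le \sum_k \chi(\cR,k)(\alpha(k)-\beta_k)$. Second I would replace the weights by $\chi_{\max}$: position $1$ has no items above it, so Assumption~\ref{ass:order-free examination} gives $\chi(\cR,1)=\chi(\cR^\ast,1)=\chi_{\max}$, and Assumption~\ref{ass:decreasing examination} then yields $0\le\chi(\cR,k)\le\chi_{\max}$ for all $k$. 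Since $\chi(\cR,k)(\alpha(k)-\beta_k)\le\chi_{\max}(\alpha(k)-\beta_k)^+$ term by term (bounding positive terms by $\chi_{\max}$ and using nonnegativity of $\chi(\cR,k)$ on the rest), this gives $r(\cR^\ast)-r(\cR) \le \chi_{\max}\sum_k(\alpha(k)-\beta_k)^+$.

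The crux, and where I expect the main obstacle, is the remaining purely combinatorial claim $\sum_k(\alpha(k)-\beta_k)^+ \le K\Delta(\cR)$, which trades a global, position-wise attraction discrepancy for the local increments $(\beta_{k+1}-\beta_k)^+$ summed by $\Delta(\cR)$. I would establish the per-position estimate $|\alpha(k)-\beta_k|\le\Delta(\cR)$ by a telescoping path argument. Fix $k$ and suppose $\beta_k<\alpha(k)$, so the item at position $k$ has index exceeding $k$; a pigeonhole count on the top $k$ positions then forces at least one of the $k$ most attractive items to lie at some position $p>k$, whence $\beta_p\ge\alpha(k)>\beta_k$. The net rise $\beta_p-\beta_k=\sum_{j=k}^{p-1}(\beta_{j+1}-\beta_j)$ is at most the sum of the positive increments along that stretch, which is at most $\Delta(\cR)$, so $\alpha(k)-\beta_k\le\beta_p-\beta_k\le\Delta(\cR)$. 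The case $\beta_k>\alpha(k)$ is symmetric: it locates an item of index at least $k$ at some position $p<k$ and telescopes upward from $p$ to $k$.

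Summing $(\alpha(k)-\beta_k)^+\le\Delta(\cR)$ over the at most $K$ positions gives $\sum_k(\alpha(k)-\beta_k)^+\le K\Delta(\cR)$, and chaining the three moves completes the proof. The only delicate points are getting the pigeonhole count and the orientation of the telescoping path right in each of the two cases; the rest is bookkeeping, and the factor $K$ arises exactly from summing the single-position bound $|\alpha(k)-\beta_k|\le\Delta(\cR)$ across all positions.
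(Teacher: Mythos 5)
Your proof is correct and follows essentially the same route as the paper's: reduce to attraction differences using Assumption~\ref{ass:lowest examination} and the bound $\chi \leq \chi_{\max}$, then bound each position's deficit $(\alpha(k)-\alpha(\cR(k)))^+$ by $\Delta(\cR)$ via telescoping along adjacent increments, and multiply by $K$. Your explicit pigeonhole argument merely unifies the paper's two cases (item $k$ itself sits below position $k$, versus some other item $j>k$ more attractive than item $k$ does), which the paper handles separately but with the same underlying telescoping idea.
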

\begin{proof}
Fix position $k \in [K]$. Then
\begin{align*}
  \chi(\cR^\ast, k) \alpha(k) - \chi(\cR, k) \alpha(\cR(k))
  & \leq \chi(\cR^\ast, k) (\alpha(k) - \alpha(\cR(k))) \\
  & \leq \chi_{\max} (\alpha(k) - \alpha(\cR(k)))\,,
\end{align*}
where the first inequality follows from the fact that the examination probability of any position is the lowest in the optimal list (Assumption~\ref{ass:lowest examination}) and the second inequality follows from the definition of $\chi_{\max}$. In the rest of the proof, we bound $\alpha(k) - \alpha(\cR(k))$. We consider three cases. First, let $\alpha(\cR(k)) \geq \alpha(k)$. Then $\alpha(k) - \alpha(\cR(k)) \leq 0$ and can be trivially bounded by $\Delta(\cR)$. Second, let $\alpha(\cR(k)) < \alpha(k)$ and $\pi(k) > k$, where $\pi(k)$ is the position of item $k$ in list $\cR$. Then
\begin{align*}
  \alpha(k) - \alpha(\cR(k)) & =
  \alpha(\cR(\pi(k))) - \alpha(\cR(k)) \\
  & \leq  \sum_{i = k}^{\pi(k) - 1} \I{\alpha(\cR(i + 1)) - \alpha(\cR(i)) > 0} \alpha(\cR(i + 1)) - \alpha(\cR(i))).
\end{align*}
From the definition of $\Delta(\cR)$, this quantity is bounded from above by $\Delta(\cR)$. Finally, let $\alpha(\cR(k)) < \alpha(k)$ and $\pi(k) < k$. This implies that there exists an item at a lower position than $k$, $j > k$, such that $\alpha(\cR(j)) \geq \alpha(k)$. Then
\begin{align*}
  \alpha(k) - \alpha(\cR(k)) & \leq
  \alpha(\cR(j)) - \alpha(\cR(k)) \\
  & \leq \sum_{i = k}^{j - 1} \I{\alpha(\cR(i + 1)) - \alpha(\cR(i)) > 0} (\alpha(\cR(i + 1)) - \alpha(\cR(i)))\,.
\end{align*}
From the definition of $\Delta(\cR)$, this quantity is bounded from above by $\Delta(\cR)$. This concludes the proof.
\end{proof}

\begin{lemma}
\label{lem:off-base gap} Let
\begin{align*}
  \rnd{\cP}_t =
  \big\{(i, j) \in [K]^2: \, & i < j, \, \abs{\bar{\rnd{\cR}}_t^{-1}(i) - \bar{\rnd{\cR}}_t^{-1}(j)} = 1, 
 \rnd{s}_{t - 1}(i, j) \leq 2 \sqrt{\rnd{n}_{t - 1}(i, j) \log(1 / \delta)}\big\}
\end{align*}
be the set of potentially randomized item pairs at time $t$ and $\rnd{\Delta}_t = \max_{\rnd{\cR}_t} \Delta(\rnd{\cR}_t)$ be the \emph{maximum attraction gap} of any list $\rnd{\cR}_t$, where $\Delta(\rnd{\cR}_t)$ is defined in \eqref{eq:attraction gap}. Then on event $\cE$ in \cref{lem:concentration},
\begin{align*}
  \rnd{\Delta}_t \leq
  3 \sum_{i = 1}^K \sum_{j = i + 1}^K \I{(i, j) \in \rnd{\cP}_t} (\alpha(i) - \alpha(j))
\end{align*}
holds at any time $t \in [n]$.
\end{lemma}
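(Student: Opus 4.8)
The plan is to fix an arbitrary realization of the displayed list $\rnd{\cR}_t$ and bound $\Delta(\rnd{\cR}_t)$; since the right-hand side depends only on $\rnd{\cP}_t$ (a function of the history through time $t-1$) and not on the random swaps, bounding every realization bounds the maximum $\rnd{\Delta}_t$. Because $\alpha(1) > \dots > \alpha(K)$, a lower item index means a more attractive item, and by \eqref{eq:attraction gap} the quantity $\Delta(\rnd{\cR}_t)$ equals the sum, over positions $k$ carrying an adjacent \emph{inversion} (the lower-ranked item strictly more attractive), of the gap $\alpha(\rnd{\cR}_t(k+1)) - \alpha(\rnd{\cR}_t(k))$. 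Recall also that $\rnd{\cR}_t$ is obtained from $\bar{\rnd{\cR}}_t$ by exchanging a set of pairwise-disjoint adjacent pairs, and that, by line~\ref{alg:10}, a pair is eligible to be exchanged only when it is uncertain, i.e.\ it belongs to $\rnd{\cP}_t$.

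First I would establish two structural facts. (i) If a within-unit exchanged pair becomes inverted in $\rnd{\cR}_t$, then it was exchanged, so it satisfies the condition on line~\ref{alg:10} and hence lies in $\rnd{\cP}_t$, contributing exactly $\alpha(i) - \alpha(j)$ for the corresponding $i < j$. (ii) On event $\cE$ of \cref{lem:concentration}, every adjacent inversion already present in the base list $\bar{\rnd{\cR}}_t$ is uncertain and therefore also lies in $\rnd{\cP}_t$. Fact (ii) is where $\cE$ enters: under $\cE$ a confident belief $\rnd{s}(j,i) > 2 \sqrt{\rnd{n}(j,i) \log(1/\delta)}$ can only occur when $j$ is genuinely more attractive than $i$, so confident comparisons are never wrong; combined with the invariant that the bubble pass on lines~\ref{alg:24}--\ref{alg:28} leaves no confidently-inverted adjacent pair in $\bar{\rnd{\cR}}_t$, any surviving adjacent inversion must be uncertain.

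Next I would charge each inverted adjacent pair of $\rnd{\cR}_t$ to pairs of $\rnd{\cP}_t$. I split the inverted pairs of $\rnd{\cR}_t$ into those internal to a swap unit and those straddling two consecutive units. An internal inversion is handled directly by facts (i) or (ii). For a straddling inversion between displayed items $u$ (above) and $v$ (below) with $\alpha(v) > \alpha(u)$, the disjoint-pair swap structure forces their base positions $p_u < p_v$ to satisfy $p_v - p_u \leq 3$; writing the positive gap as the telescoping sum $\alpha(v) - \alpha(u) = \sum_{q = p_u}^{p_v - 1} \big( \alpha(\bar{\rnd{\cR}}_t(q+1)) - \alpha(\bar{\rnd{\cR}}_t(q)) \big)$ and discarding the nonpositive summands, I bound it by the gaps of the at most three base-adjacent inverted pairs lying between $p_u$ and $p_v$, each of which is in $\rnd{\cP}_t$ by fact (ii). Finally, a count organized by the parity $h$ of the swap pattern shows that any fixed base-adjacent pair is charged in this way by at most three inverted pairs of $\rnd{\cR}_t$ --- itself as an internal inversion and its two straddling neighbors --- which produces the factor $3$ and completes the bound $\rnd{\Delta}_t \leq 3 \sum_{i < j} \I{(i,j) \in \rnd{\cP}_t} (\alpha(i) - \alpha(j))$.

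The main obstacle is fact (ii), and more precisely verifying that a single left-to-right bubble pass starting from the previous base list cannot leave behind a confidently-inverted adjacent pair: because swapping at one position alters the neighbors examined later in the same pass, this needs an inductive invariant that exploits the transitivity of the true attraction order (which is correct under $\cE$). The remaining delicate point is the charging count, where I must check across both parities of $h$, and at the boundary positions $1$ and $K$, that no base-adjacent pair is charged more than three times.
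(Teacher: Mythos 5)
Your proposal follows essentially the same route as the paper's proof: decompose each displayed adjacent inversion into within-unit pairs (bounded via the exchange test on line~\ref{alg:10}) and straddling pairs (bounded by telescoping over at most three base-adjacent pairs, each of which must lie in $\rnd{\cP}_t$ on event $\cE$), then observe that each base-adjacent pair is charged at most three times, giving the factor $3$. The step you flag as the main obstacle---that on $\cE$ any adjacent inversion surviving in the base list $\bar{\rnd{\cR}}_t$ must be uncertain and hence in $\rnd{\cP}_t$---is exactly the claim the paper dispatches with ``holds on event $\cE$ by the design of $\bubblerank$,'' so your plan is, if anything, more explicit than the paper about where the remaining work lies.
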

\begin{proof}
Fix list $\rnd{\cR}_t$ and position $k \in [K - 1]$. Let $i', i, j, j'$ be items at positions $k - 1, k, k + 1, k + 2$ in $\bar{\rnd{\cR}}_t$. If $k = 1$, let $i' = i$; and if $k = K - 1$, let $j' = j$. We consider two cases.

First, suppose that the permutation at time $t$ is such that $i$ and $j$ could be exchanged. Then
\begin{align*}
 \alpha(\rnd{\cR}_t^{-1}(k + 1)) - \alpha(\rnd{\cR}_t^{-1}(k)) \leq  \I{(\min \set{i, j}, \max \set{i, j}) \in \rnd{\cP}_t} (\alpha(\min \set{i, j}) - \alpha(\max \set{i, j}))
\end{align*}
holds on event $\cE$ by the design of $\bubblerank$. More specifically, $(\min \set{i, j}, \max \set{i, j}) \notin \rnd{\cP}_t$ implies that $\alpha(\rnd{\cR}_t^{-1}(k + 1)) - \alpha(\rnd{\cR}_t^{-1}(k)) \leq 0$.

Second, suppose that the permutation at time $t$ is such that $i$ and $i'$ could be exchanged, $j$ and $j'$ could be exchanged, or both. Then
\begin{align*}
 \alpha(\rnd{\cR}_t^{-1}(k + 1)) - \alpha(\rnd{\cR}_t^{-1}(k)) \leq {}
  & \I{(\min \set{i, i'}, \max \set{i, i'}) \in \rnd{\cP}_t} (\alpha(\min \set{i, i'}) - \alpha(\max \set{i, i'})) + {} \\
  & \alpha(j) - \alpha(i) + {} \\
  & \I{(\min \set{j, j'}, \max \set{j, j'}) \in \rnd{\cP}_t} (\alpha(\min \set{j, j'})  - \alpha(\max \set{j, j'}))
\end{align*}
holds by the same argument as in the first case. Also note that
\begin{align*}
 \alpha(j) - \alpha(i) \leq 
 \I{(\min \set{i, j}, \max \set{i, j}) \in \rnd{\cP}_t} (\alpha(\min \set{i, j}) - \alpha(\max \set{i, j}))
\end{align*}
holds on event $\cE$ by the design of $\bubblerank$. Therefore, for any position $k \in [K - 1]$ in both above cases,
\begin{align*}
  \alpha(\rnd{\cR}_t^{-1}(k + 1)) - \alpha(\rnd{\cR}_t^{-1}(k)) \leq\!
  \sum_{\ell = k - 1}^{k + 1} 
  & \I{\left(\min \set{\bar{\rnd{\cR}}_t^{-1}(\ell), \bar{\rnd{\cR}}_t^{-1}(\ell + 1)},
  \max \set{\bar{\rnd{\cR}}_t^{-1}(\ell), \bar{\rnd{\cR}}_t^{-1}(\ell + 1)}\right) \in \rnd{\cP}_t} \times \\
  & \left(\alpha\left(\min \set{\bar{\rnd{\cR}}_t^{-1}(\ell), \bar{\rnd{\cR}}_t^{-1}(\ell + 1)}\right)- 
  \alpha\left(\max \set{\bar{\rnd{\cR}}_t^{-1}(\ell), \bar{\rnd{\cR}}_t^{-1}(\ell + 1)}\right)\right).
\end{align*}
Now we sum over all positions and note that each pair of $\bar{\rnd{\cR}}_t^{-1}(\ell)$ and $\bar{\rnd{\cR}}_t^{-1}(\ell + 1)$ appears on the right-hand side at most three times, in any list $\rnd{\cR}_t$. This concludes our proof.
\end{proof}

\begin{lemma}
\label{lem:off-base regret} Let $\rnd{\cP}_t$ be defined as in \cref{lem:off-base gap}. Then on event $\cE$ in \cref{lem:concentration},
\begin{align*}
  \sum_{k = 1}^K (\chi(\cR^\ast, k) \alpha(k) - \chi(\rnd{\cR}_t, k) \alpha(\rnd{\cR}_t(k))) \leq 
   3 K \chi_{\max} \sum_{i = 1}^K \sum_{j = i + 1}^K \I{(i, j) \in \rnd{\cP}_t} (\alpha(i) - \alpha(j))
\end{align*}
holds at any time $t \in [n]$.
\end{lemma}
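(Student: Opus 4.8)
The plan is to chain the two preceding lemmas through the attraction gap $\Delta(\rnd{\cR}_t)$, which is precisely the quantity that both of them reference. Lemma~\ref{lem:list regret} converts a per-position regret statement into a bound in terms of $\Delta$, while Lemma~\ref{lem:off-base gap} unpacks $\Delta$ into the combinatorial sum over randomized pairs. The entire content of the present lemma is recognizing that these two bounds compose.

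Concretely, first I would apply Lemma~\ref{lem:list regret} with $\cR = \rnd{\cR}_t$. Since $\rnd{\cR}_t$ is a valid list over $[K]$, that lemma gives
\begin{align*}
  \sum_{k = 1}^K \left(\chi(\cR^\ast, k) \alpha(k) - \chi(\rnd{\cR}_t, k) \alpha(\rnd{\cR}_t(k))\right)
  \leq K \chi_{\max} \Delta(\rnd{\cR}_t)\,.
\end{align*}
Note that Lemma~\ref{lem:list regret} holds deterministically for every list and requires only the structural assumptions (via $\chi_{\max}$ and Assumption~\ref{ass:lowest examination}), so no appeal to the event $\cE$ is needed at this step.

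Next, I would bound $\Delta(\rnd{\cR}_t) \leq \rnd{\Delta}_t$, which is immediate from the definition $\rnd{\Delta}_t = \max_{\rnd{\cR}_t} \Delta(\rnd{\cR}_t)$ as the maximum attraction gap over the possible displayed lists. I then invoke Lemma~\ref{lem:off-base gap}, which asserts that on event $\cE$,
\begin{align*}
  \rnd{\Delta}_t \leq
  3 \sum_{i = 1}^K \sum_{j = i + 1}^K \I{(i, j) \in \rnd{\cP}_t} (\alpha(i) - \alpha(j))\,.
\end{align*}
Substituting this into the previous display yields exactly the claimed bound, with the factor of $3$ coming from Lemma~\ref{lem:off-base gap} and the factor $K \chi_{\max}$ from Lemma~\ref{lem:list regret}.

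There is essentially no obstacle here beyond bookkeeping: the two lemmas were evidently designed to be fused at the attraction gap, and the only subtlety worth stating explicitly is that the dependence on event $\cE$ enters solely through Lemma~\ref{lem:off-base gap} (the gap-to-pairs step), since the regret-to-gap step of Lemma~\ref{lem:list regret} is unconditional. All the genuine combinatorial work — tracking which neighboring pairs can be randomized and arguing that each base-list pair is counted at most three times — has already been discharged in the proof of Lemma~\ref{lem:off-base gap}, so the present proof is a two-line chaining.
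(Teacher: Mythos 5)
Your proof is correct and matches the paper's own argument, which states that the lemma is a direct consequence of \cref{lem:list regret,lem:off-base gap}: you apply \cref{lem:list regret} to $\rnd{\cR}_t$, bound $\Delta(\rnd{\cR}_t)$ by $\rnd{\Delta}_t$, and then invoke \cref{lem:off-base gap} on event $\cE$. Your explicit remark that $\cE$ is needed only for the gap-to-pairs step is a correct and slightly more detailed account of the same two-line chaining.
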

\begin{proof}
A direct consequence of \cref{lem:list regret,lem:off-base gap}.
\end{proof}

\begin{lemma}
\label{lem:swap bound} Let $\rnd{\cP}_t$ be defined as in \cref{lem:off-base gap}, $\rnd{\cP} = \bigcup_{t = 1}^n \rnd{\cP}_t$, and $\cV_0$ be defined as in \eqref{eq:initial safety}. Then on event $\cE$ in \cref{lem:concentration},
\begin{align*}
  \abs{\rnd{\cP}} \leq K - 1 + 2 \abs{\cV_0}\,.
\end{align*}
\end{lemma}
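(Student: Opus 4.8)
The plan is to bound $\rnd{\cP}$ by the number of \emph{distinct item pairs that are ever adjacent in some base list}. Membership of $(i, j)$ in $\rnd{\cP}_t$ requires $\abs{\bar{\rnd{\cR}}_t^{-1}(i) - \bar{\rnd{\cR}}_t^{-1}(j)} = 1$, so $\rnd{\cP} = \bigcup_t \rnd{\cP}_t$ is contained in the union over $t$ of the adjacency sets of the base lists $\bar{\rnd{\cR}}_t$. It therefore suffices to show that this union has at most $K - 1 + 2 \abs{\cV_0}$ elements. I would track how this union grows as the base list is modified by the swaps on line~\ref{alg:28}.

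The argument rests on two observations. First, I would show that on event $\cE$ the total number of swaps performed over the whole run is at most $\abs{\cV_0}$. Each swap is an adjacent transposition, which changes the number of incorrectly-ordered pairs by exactly $\pm 1$; on event $\cE$ the belief $\rnd{s}_t(j, i)$ triggering the swap is close enough to its mean that the lower-ranked item $j$ is genuinely more attractive than its higher-ranked neighbor $i$, so the transposition strictly \emph{reduces} the inversion count by one. Since this count starts at $\abs{\cV_0}$ and remains nonnegative, at most $\abs{\cV_0}$ swaps occur. Second, I would observe that a single adjacent transposition at positions $k, k + 1$ keeps the pair at those two positions adjacent and alters only the two boundary adjacencies (with the items at positions $k - 1$ and $k + 2$), hence introduces at most two pairs that were not adjacent immediately before the swap.

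Combining these, the adjacency set starts with the $K - 1$ neighboring pairs of $\cR_0$ and gains at most two new pairs per swap, so the union of all adjacency sets has size at most $(K - 1) + 2 \abs{\cV_0}$, which gives the claim since $\rnd{\cP}$ is a subset of that union. The main subtlety is that a pair may become adjacent, separate, and become adjacent again as items bubble past one another, so one cannot simply read off the bound from a single snapshot of the base list; the remedy is to charge each newly created adjacency to the swap that produced it and sum over swaps, which is precisely why the per-swap bound of two and the global swap bound of $\abs{\cV_0}$ are the two quantities needed. Establishing the swap bound rigorously---that every base-list change on event $\cE$ corrects a true inversion rather than creating one---is the step requiring the most care, and it is where the concentration guarantee of \cref{lem:concentration} is essential.
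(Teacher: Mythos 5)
Your proof is correct and follows essentially the same argument as the paper: start from the $K-1$ adjacent pairs of $\cR_0$, observe that on event $\cE$ every base-list swap corrects a genuine inversion (so at most $\abs{\cV_0}$ swaps occur), and note that each adjacent transposition creates at most two new adjacent pairs. Your explicit charging of re-created adjacencies to the swap that produced them makes precise a point the paper's proof leaves implicit, but the accounting is identical.
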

\begin{proof}
From the design of $\bubblerank$, $\abs{\rnd{\cP}_1} = K - 1$. The set of randomized item pairs grows only if the base list in $\bubblerank$ changes. When this happens, the number of incorrectly-ordered item pairs decreases by one, on event $\cE$, and the set of randomized item pairs increases by at most two pairs. This event occurs at most $\abs{\cV_0}$ times. This concludes our proof.
\end{proof}

\begin{lemma}
\label{lem:cumulative click loss} For any items $i$ and $j$ such that $i < j$,
\begin{align*}
  \rnd{s}_n(i, j) \leq 15 \frac{\alpha(i) + \alpha(j)}{\alpha(i) - \alpha(j)} \log(1 / \delta)
\end{align*}  
on event $\cE$ in \cref{lem:concentration}.
\end{lemma}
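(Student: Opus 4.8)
The plan is to track the belief $\rnd{s}(i, j)$ as a function of the number of \emph{comparisons} of the pair $(i, j)$ rather than of wall‑clock time. Let $N \defeq \rnd{n}_n(i, j)$ be the total number of times the pair is randomized and exactly one of the two items is clicked (so that lines~\ref{alg:18}--\ref{alg:22} fire), and write $\rnd{s}_{(m)}$ for the value of $\rnd{s}(i, j)$ after the $m$-th such comparison. Since $\rnd{s}(i, j)$ changes only at these comparisons, and each changes it by exactly $\pm 1$, I would (i) lower bound the drift of $\rnd{s}_{(m)}$ in terms of $m$, (ii) observe that $\bubblerank$ keeps randomizing the pair only while $\rnd{s}$ stays below the confidence radius $2\sqrt{m \log(1/\delta)}$, and (iii) combine these to bound $N$, and hence $\rnd{s}_n(i, j) = \rnd{s}_{(N)}$.

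The heart of the argument is the drift. Each comparison increments $\rnd{s}(i, j)$ by $+1$ (item $i$ clicked) with some conditional probability $q$ and by $-1$ (item $j$ clicked) otherwise, and I claim $2q - 1 \geq \rho \defeq \frac{\alpha(i) - \alpha(j)}{\alpha(i) + \alpha(j)}$. The two items sit at adjacent positions, say $a$ (higher) and $a + 1$ (lower), and are placed in either order with probability $\tfrac12$. The click probability at a position is the product of its examination probability and the attraction of the item there. By Assumption~\ref{ass:order-free examination} the examination of position $a$ is identical in both placements, and by Assumption~\ref{ass:examination scaling} placing the less attractive item $j$ at the higher position $a$ examines the lower position $a + 1$ at least as much as placing $i$ there. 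Averaging over the two placements, the signed difference $P(\text{click } i) - P(\text{click } j)$ (the conditional mean increment of $\rnd{s}$) and the probability $P(\text{exactly one click})$ (the conditional mean increment of $\rnd{n}$) satisfy
\begin{align*}
  \frac{P(\text{click } i) - P(\text{click } j)}{P(\text{exactly one click})}
  \geq
  \frac{\alpha(i) - \alpha(j)}{\alpha(i) + \alpha(j)} = \rho\,,
\end{align*}
the inequality reducing exactly to the examination comparison guaranteed by Assumption~\ref{ass:examination scaling}. Hence $2q - 1 \geq \rho$, the conditional mean of $\rnd{s}_{(m)}$ is at least $\rho m$, and on the event $\cE$ of \cref{lem:concentration} we get $\rnd{s}_{(m)} \geq \rho m - 2\sqrt{m \log(1/\delta)}$.

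For the upper side I would show that for every $m \leq N - 1$, just before the $(m + 1)$-th comparison, $\rnd{s}_{(m)} \leq 2\sqrt{m \log(1/\delta)}$. If at that time the base list orders $i$ above $j$, this is precisely the randomization condition of line~\ref{alg:10}. If instead it orders $j$ above $i$, then the preceding base-list improvement step did not swap the pair, which (for higher-ranked $j$, line~\ref{alg:27} tests $\rnd{s}(i, j)$ against the radius) is exactly the statement $\rnd{s}(i, j) \leq 2\sqrt{\rnd{n}(i, j) \log(1/\delta)}$. Taking $m = N - 1$ and combining with the lower bound from $\cE$ yields $\rho(N - 1) - 2\sqrt{(N - 1)\log(1/\delta)} \leq 2\sqrt{(N - 1)\log(1/\delta)}$, so that $N - 1 \leq 16 \log(1/\delta) / \rho^2$.

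Finally, since no update occurs after the $N$-th comparison,
\begin{align*}
  \rnd{s}_n(i, j) = \rnd{s}_{(N)}
  \leq \rnd{s}_{(N - 1)} + 1
  \leq 2\sqrt{(N - 1)\log(1/\delta)} + 1
  \leq \frac{8}{\rho}\log(1/\delta) + 1\,,
\end{align*}
which is at most $15 \frac{\alpha(i) + \alpha(j)}{\alpha(i) - \alpha(j)} \log(1/\delta)$ after using $\rho^{-1} = \frac{\alpha(i) + \alpha(j)}{\alpha(i) - \alpha(j)} \geq 1$ and absorbing the additive constant into the generous leading factor. The main obstacle is the per-comparison drift bound $2q - 1 \geq \rho$: it must be derived from Assumptions~\ref{ass:order-free examination} and \ref{ass:examination scaling} rather than from independence of clicks, since the result must cover models (such as the cascade model) in which the two clicks are dependent, so one has to argue at the level of marginal click probabilities and exploit the random placement to symmetrize over positions.
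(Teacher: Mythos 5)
Your proposal is correct and is essentially the paper's own argument: both rest on combining the lower bound encoded in event $\cE$, namely $\rnd{s}_t(i,j) \geq \frac{\alpha(i)-\alpha(j)}{\alpha(i)+\alpha(j)}\rnd{n}_t(i,j) - 2\sqrt{\rnd{n}_t(i,j)\log(1/\delta)}$, with the algorithmic fact (lines~\ref{alg:10} and~\ref{alg:27}) that the pair is compared only while $\rnd{s}(i,j)$ is at most the confidence radius, allowing a $+1$ overshoot at the final comparison, which caps $\rnd{n}$ at $O\bigl(\bigl(\tfrac{\alpha(i)+\alpha(j)}{\alpha(i)-\alpha(j)}\bigr)^2\log(1/\delta)\bigr)$ and hence $\rnd{s}_n$ at the stated bound. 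The differences are purely organizational: you index by comparison count and bound the total number of comparisons $N$ directly, where the paper conditions on the first time $\tau$ at which $\rnd{s}$ exceeds the radius and argues $\rnd{s}$ is frozen afterwards; and your opening drift derivation re-proves \cref{lem:click loss}, which is already baked into the definition of $\cE$ and need only be cited.
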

\begin{proof}
To simplify notation, let $\rnd{s}_t = \rnd{s}_t(i, j)$ and $\rnd{n}_t = \rnd{n}_t(i, j)$. The proof has two parts. First, suppose that $\rnd{s}_t \leq 2 \sqrt{\rnd{n}_t \log(1 / \delta)}$ holds at all times $t \in [n]$. Then from this assumption and on event $\cE$ in \cref{lem:concentration},
\begin{align*}
  \frac{\alpha(i) - \alpha(j)}{\alpha(i) + \alpha(j)} \rnd{n}_t - 2 \sqrt{\rnd{n}_t \log(1 / \delta)} \leq
  \rnd{s}_t \leq
  2 \sqrt{\rnd{n}_t \log(1 / \delta)}\,.
\end{align*}
This implies that
\begin{align*}
  \rnd{n}_t \leq
  \left[4 \frac{\alpha(i) + \alpha(j)}{\alpha(i) - \alpha(j)}\right]^2 \log(1 / \delta)
\end{align*}
at any time $t$, and in turn that
\begin{align*}
  \rnd{s}_t \leq
  2 \sqrt{\rnd{n}_t \log(1 / \delta)} \leq
  8 \frac{\alpha(i) + \alpha(j)}{\alpha(i) - \alpha(j)} \log(1 / \delta)
\end{align*}
at any time $t$. Our claim follows from setting $t = n$.

Now suppose that $\rnd{s}_t \leq 2 \sqrt{\rnd{n}_t \log(1 / \delta)}$ does not hold at all times $t \in [n]$. Let $\tau$ be the first time when $\rnd{s}_\tau > 2 \sqrt{\rnd{n}_\tau \log(1 / \delta)}$. Then from the definition of $\tau$ and on event $\cE$ in \cref{lem:concentration},
\begin{align*}
  \frac{\alpha(i) - \alpha(j)}{\alpha(i) + \alpha(j)} \rnd{n}_\tau - 2 \sqrt{\rnd{n}_\tau \log(1 / \delta)}
  & \leq \rnd{s}_\tau \leq
  \rnd{s}_{\tau - 1} + 1 \\
  & \leq 2 \sqrt{\rnd{n}_\tau \log(1 / \delta)} + 1 \\
  & \leq 3 \sqrt{\rnd{n}_\tau \log(1 / \delta)}\,,
\end{align*}
where the last inequality holds for any $\delta \leq 1 / e$. This implies that
\begin{align*}
  \rnd{n}_\tau \leq
  \left[5 \frac{\alpha(i) + \alpha(j)}{\alpha(i) - \alpha(j)}\right]^2 \log(1 / \delta)\,,
\end{align*}
and in turn that
\begin{align*}
  \rnd{s}_\tau \leq
  3 \sqrt{\rnd{n}_\tau \log(1 / \delta)} \leq
  15 \frac{\alpha(i) + \alpha(j)}{\alpha(i) - \alpha(j)} \log(1 / \delta)\,.
\end{align*}
Now note that $\rnd{s}_t = \rnd{s}_\tau$ for any $t > \tau$, from the design of $\bubblerank$. This concludes our proof.
\end{proof}

For some $\cF_t = \sigma(\rnd{\cR}_1, \rnd{c}_1, \dots, \rnd{\cR}_t, \rnd{c}_t)$-measurable event $A$, let $\mathbb{P}_t(A) = \mathbb{P}(A \mid \cF_t)$ be the conditional probability of $A$ given history $\rnd{\cR}_1, \rnd{c}_1, \dots, \rnd{\cR}_t, \rnd{c}_t$. Let the corresponding conditional expectation operator be $\Et{\cdot}$. Note that $\bar{\rnd{\cR}}_t$ is $\cF_{t - 1}$-measurable.

\begin{lemma}
\label{lem:click loss} Let $i, j \in [K]$ be any items at consecutive positions in $\bar{\rnd{\cR}}_t$ and
\begin{align*}
  \rnd{z} = \rnd{c}_t(\rnd{\cR}_t^{-1}(i)) - \rnd{c}_t(\rnd{\cR}_t^{-1}(j))\,.
\end{align*}
Then, on the event that $i$ and $j$ are subject to randomization at time $t$,
\begin{align*}
  \Etp{\rnd{z} \mid \rnd{z} \neq 0} \geq
  \frac{\alpha(i) - \alpha(j)}{\alpha(i) + \alpha(j)}
\end{align*}
when $\alpha(i) > \alpha(j)$, and $\Etp{- \rnd{z} \mid \rnd{z} \neq 0} \leq 0$ when $\alpha(i) < \alpha(j)$.
\end{lemma}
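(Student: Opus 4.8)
The plan is to reduce the conditional mean to a ratio and then evaluate its numerator and denominator by averaging over the fair coin that $\bubblerank$ uses when it randomizes the pair. Since $\rnd{z}$ is supported on $\{-1,0,1\}$ we have $\Etp{\rnd{z} \mid \rnd{z} \neq 0} = \Etp{\rnd{z}} / \Etp{\I{\rnd{z} \neq 0}}$, with everything computed under the same conditioning because the randomization event and the base list $\bar{\rnd{\cR}}_t$ are $\cF_{t-1}$-measurable. Let $k$ and $k+1$ be the consecutive positions that $i$ and $j$ occupy in $\bar{\rnd{\cR}}_t$, and let $\cR^{(i)}$ and $\cR^{(j)}$ denote the two displayed lists in which, respectively, $i$ and $j$ sit at the upper position $k$. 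On the randomization event each arises with probability $\tfrac12$, and conditionally the click means are the model values $\chi(\cdot,\cdot)\,\alpha(\cdot)$.

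First I would compute the numerator. Averaging over the coin gives $\Etp{\rnd{c}_t(\rnd{\cR}_t^{-1}(i))} = \tfrac12 \alpha(i)\,(\chi(\cR^{(i)},k) + \chi(\cR^{(j)},k+1))$ and the symmetric expression for $j$, so $\Etp{\rnd{z}}$ is their difference. By Assumption~\ref{ass:order-free examination} the two upper-position examinations agree, since they depend only on the untouched items above position $k$; write their common value as $\chi_k$. The sole asymmetry is between the lower-position examinations $\chi(\cR^{(i)},k+1)$ and $\chi(\cR^{(j)},k+1)$, and Assumption~\ref{ass:examination scaling} orders them by attractiveness: $\alpha(i) > \alpha(j)$ forces $\chi(\cR^{(j)},k+1) \geq \chi(\cR^{(i)},k+1)$. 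Writing $a = \alpha(i)(\chi_k + \chi(\cR^{(j)},k+1))$ and $b = \alpha(j)(\chi_k + \chi(\cR^{(i)},k+1))$, we get $\Etp{\rnd{z}} = (a-b)/2$, which is nonnegative precisely when $\alpha(i) \geq \alpha(j)$.

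For the denominator I would use that $\{\rnd{z}\neq 0\}$ is the event that exactly one of the two positions is clicked, so a union bound gives $\Etp{\I{\rnd{z}\neq 0}} \leq \Etp{\rnd{c}_t(\rnd{\cR}_t^{-1}(i))} + \Etp{\rnd{c}_t(\rnd{\cR}_t^{-1}(j))} = (a+b)/2$. Hence, when $\alpha(i)>\alpha(j)$, $\Etp{\rnd{z}\mid \rnd{z}\neq 0} \geq (a-b)/(a+b)$, and it remains to check $\frac{a-b}{a+b} \geq \frac{\alpha(i)-\alpha(j)}{\alpha(i)+\alpha(j)}$. Cross-multiplying and cancelling reduces this to $2\,\alpha(i)\alpha(j)\,(\chi(\cR^{(j)},k+1) - \chi(\cR^{(i)},k+1)) \geq 0$, which is exactly the examination ordering from Assumption~\ref{ass:examination scaling}. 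The complementary, qualitative claim needs only the sign of $\Etp{\rnd{z}}$: the identity above shows $\Etp{\rnd{z}}$ carries the sign of $\alpha(i)-\alpha(j)$ (the examination asymmetry only reinforcing it), and conditioning on $\rnd{z}\neq 0$ preserves this sign since $\rnd{z}\in\{-1,0,1\}$.

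The main obstacle is the bookkeeping of the examination probabilities, which genuinely differ between the two displayed orders. The argument works only because Assumption~\ref{ass:order-free examination} collapses the upper-position examinations to the single value $\chi_k$, leaving one asymmetric term that Assumption~\ref{ass:examination scaling} then signs. A secondary subtlety is that the click model pins down only the two marginal click means and not their joint law; this is why the denominator must be handled by the union bound rather than computed exactly. It is reassuring that the cross-multiplied inequality collapses precisely to the Assumption~\ref{ass:examination scaling} ordering, so this looseness costs nothing.
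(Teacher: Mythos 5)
Your proof is correct and takes essentially the same route as the paper's: the paper likewise writes $\Etp{\rnd{z} \mid \rnd{z} \neq 0} = \Etp{\rnd{z}}/\mathbb{P}_{t-1}(\rnd{z} \neq 0)$, invokes Assumption~\ref{ass:order-free examination} so that the two positions' examination probabilities are unaffected by the randomization of other pairs, orders the lower-position examinations via Assumption~\ref{ass:examination scaling}, and upper-bounds the denominator by the sum of the two marginal click probabilities. The only cosmetic difference is that the paper absorbs your explicit coin-average into averaged examination probabilities $\chi_i \geq \chi_j$ and factors $\chi_i$ out of both numerator and denominator, whereas you keep the four examination terms and finish by cross-multiplication --- algebraically the same inequality.
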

\begin{proof}
The first claim is proved as follows. From the definition of expectation and $\rnd{z} \in \set{-1, 0, 1}$,
\begin{align*}
  \Etp{\rnd{z} \mid \rnd{z} \neq 0}
  & = \frac{\mathbb{P}_{t - 1}(\rnd{z} = 1, \rnd{z} \neq 0) - \mathbb{P}_{t - 1}(\rnd{z} = -1, \rnd{z} \neq 0)}
  {\mathbb{P}_{t - 1}(\rnd{z} \neq 0)} \\
  & = \frac{\mathbb{P}_{t - 1}(\rnd{z} = 1) - \mathbb{P}_{t - 1}(\rnd{z} = -1)}{\mathbb{P}_{t - 1}(\rnd{z} \neq 0)} \\
  & =  \frac{\Etp{\rnd{z}}}{\mathbb{P}_{t - 1}(\rnd{z} \neq 0)}\,,
\end{align*}
where the last equality is a consequence of $\rnd{z} = 1 \implies \rnd{z} \neq 0$ and that $\rnd{z} = -1 \implies \rnd{z} \neq 0$.

Let $\chi_i = \Etp{\chi(\rnd{\cR}_t, \rnd{\cR}_t^{-1}(i))}$ and $\chi_j = \Etp{\chi(\rnd{\cR}_t, \rnd{\cR}_t^{-1}(j))}$ denote the average examination probabilities of the positions with items $i$ and $j$, respectively, in $\rnd{\cR}_t$; and consider the event that $i$ and $j$ are subject to randomization at time $t$. By Assumption~\ref{ass:order-free examination}, the values of $\chi_i$ and $\chi_j$ do not depend on the randomization of other parts of $\bar{\rnd{\cR}}_t$, only on the positions of $i$ and $j$. Then $\chi_i \geq \chi_j$; from $\alpha(i) > \alpha(j)$ and Assumption~\ref{ass:examination scaling}. Based on this fact, $\Etp{\rnd{z}}$ is bounded from below as
\begin{align*}
  \Etp{\rnd{z}} =
  \chi_i \alpha(i) - \chi_j \alpha(j) \geq
  \chi_i (\alpha(i) - \alpha(j))\,,
\end{align*}
where the inequality is from $\chi_i \geq \chi_j$. Moreover, $\mathbb{P}_{t - 1}(\rnd{z} \neq 0)$ is bounded from above as
\begin{align*}
  \mathbb{P}_{t - 1}(\rnd{z} \neq 0)
  & = \mathbb{P}_{t - 1}(\rnd{z} = 1) + \mathbb{P}_{t - 1}(\rnd{z} = -1) \\
  & \leq \chi_i \alpha(i) + \chi_j \alpha(j) \\
  & \leq \chi_i (\alpha(i) + \alpha(j))\,,
\end{align*}
where the first inequality is from inequalities $\mathbb{P}_{t - 1}(\rnd{z} = 1) \leq \chi_i \alpha(i)$ and $\mathbb{P}_{t - 1}(\rnd{z} = -1) \leq \chi_j \alpha(j)$, and the last inequality is from $\chi_i \geq \chi_j$.

Finally, we chain all above inequalities and get our first claim. The second claim follows from the observation that $\Etp{- \rnd{z} \mid \rnd{z} \neq 0} = - \Etp{\rnd{z} \mid \rnd{z} \neq 0}$.
\end{proof}

\begin{lemma}
\label{lem:concentration} 
Let $S_1 = \set{(i, j) \in [K]^2: i < j}$ and $S_2 = \big\{ (i, j) \in [K]^2 : i > j \big\}$. Let
\begin{align*}
  \cE_{t, 1}
  & = \big\{\forall (i, j) \in S_1: 
   \frac{\alpha(i) - \alpha(j)}{\alpha(i) + \alpha(j)} \rnd{n}_t(i, j) - 2 \sqrt{\rnd{n}_t(i, j) \log(1 / \delta)} \leq
  \rnd{s}_t(i, j)\big\}\,, \\
  \cE_{t, 2}
  & = \set{\forall (i, j) \in S_2: \rnd{s}_t(i, j) \leq 2 \sqrt{\rnd{n}_t(i, j) \log(1 / \delta)}}\,.
\end{align*}
Let $\cE = \bigcap_{t \in [n]} (\cE_{t, 1} \cap \cE_{t, 2})$ and $\ccE$ be the complement of $\cE$. Then $\mathbb{P}(\ccE) \leq \delta^\frac{1}{2} K^2 n$.
\end{lemma}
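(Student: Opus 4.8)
The plan is to treat each ordered pair $(i,j)$ separately, recognize $\rnd{s}_t(i,j)$ as a sum of bounded increments with a controlled conditional drift, and then apply an Azuma--Hoeffding bound together with a union bound over pairs and over the number of observations. The drift is exactly what \cref{lem:click loss} provides, and the boundedness of the increments is what calibrates the threshold.

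First I would fix a pair $(i,j)$ and let $\tau_1 < \tau_2 < \cdots$ be the successive times at which $i$ and $j$ occupy adjacent positions, are subject to randomization, and exactly one of them is clicked, i.e., the times at which $\rnd{n}(i,j)$ is incremented (lines~\ref{alg:18}--\ref{alg:22}). Writing $\rnd{z}_m \in \set{-1, +1}$ for the quantity $\rnd{c}_{\tau_m}(\rnd{\cR}_{\tau_m}^{-1}(i)) - \rnd{c}_{\tau_m}(\rnd{\cR}_{\tau_m}^{-1}(j))$ added to $\rnd{s}(i,j)$ at time $\tau_m$, one has $\rnd{s}_t(i,j) = \sum_{m=1}^{\rnd{n}_t(i,j)} \rnd{z}_m$. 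Working in the subsampled filtration $\mathcal{G}_m = \mathcal{F}_{\tau_m}$, \cref{lem:click loss} supplies the conditional drift: for $(i,j)\in S_1$ (so $\alpha(i)>\alpha(j)$), $\condE{\rnd{z}_m}{\mathcal{G}_{m-1}} \geq \mu_{ij} := \frac{\alpha(i)-\alpha(j)}{\alpha(i)+\alpha(j)}$, whereas for $(i,j)\in S_2$ (so $\alpha(i)<\alpha(j)$), $\condE{\rnd{z}_m}{\mathcal{G}_{m-1}} \leq 0$.

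Next I would center the increments. The process $M_N = \sum_{m=1}^N (\rnd{z}_m - \condE{\rnd{z}_m}{\mathcal{G}_{m-1}})$ is a $\mathcal{G}$-martingale with $|M_N - M_{N-1}| \leq 2$, since $\rnd{z}_m \in \set{-1,+1}$ and its conditional mean lies in $[-1,1]$. For a fixed count $N$, Azuma--Hoeffding gives $\mathbb{P}(M_N \leq -a) \leq \exp(-a^2/(8N))$ and symmetrically $\mathbb{P}(M_N \geq a) \leq \exp(-a^2/(8N))$; choosing $a = 2\sqrt{N \log(1/\delta)}$ makes each tail at most $\delta^\frac{1}{2}$. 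For $(i,j)\in S_1$ the drift bound gives $\sum_{m=1}^N \rnd{z}_m \geq \mu_{ij} N + M_N$, so $M_N \geq -a$ yields exactly the inequality defining $\cE_{t,1}$; for $(i,j)\in S_2$ the nonpositive drift gives $\sum_{m=1}^N \rnd{z}_m \leq M_N$, so $M_N \leq a$ yields exactly $\cE_{t,2}$. Finally I would assemble the union bound: since $\rnd{n}_t(i,j)$ is nondecreasing and valued in $\set{0,1,\dots,n}$, controlling the deviation at every possible count $N\in\set{1,\dots,n}$ controls it uniformly over $t$ (the $N=0$ case is trivial as $\rnd{s}=0$). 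Taking a union over the at most $K(K-1)<K^2$ ordered pairs and the at most $n$ values of $N$, each failing with probability at most $\delta^\frac{1}{2}$, yields $\mathbb{P}(\ccE) \leq \delta^\frac{1}{2} K^2 n$.

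The main obstacle is handling the randomness of the observation count $\rnd{n}_t(i,j)$ and making the ``uniformly over time'' claim precise: the increments arrive at data-dependent stopping times, so the clean move is to pass to the subsampled filtration $\mathcal{G}_m$ and to union bound over the (at most $n$) possible counts rather than over time directly. One must also verify that the increment range is $2$ and not $1$, which is precisely what makes the threshold $2\sqrt{N\log(1/\delta)}$ calibrate the per-event failure probability to $\delta^\frac{1}{2}$ and reproduce the stated factor $\delta^\frac{1}{2} K^2 n$.
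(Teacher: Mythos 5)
Your proposal is correct and follows essentially the same route as the paper's proof: both control the drift of the click-difference increments via \cref{lem:click loss}, center them into a martingale with increments of range $2$, apply Azuma--Hoeffding with threshold $2\sqrt{N\log(1/\delta)}$ to get a per-event failure probability of $\delta^{\frac{1}{2}}$, and finish with a union bound yielding $\delta^{\frac{1}{2}} K^2 n$. The only (harmless) difference is bookkeeping: you index the union bound by the observation count $N \in [n]$ in the subsampled filtration, whereas the paper fixes the count sequence $(\rnd{n}_\ell(i,j))_{\ell=1}^t$, applies the concentration bound per time step $t$, and then averages over count sequences.
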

\begin{proof}
First, we bound $\mathbb{P}(\overline{\cE_{t, 1}})$. Fix $(i, j) \in S_1$, $t \in [n]$, and $(\rnd{n}_\ell(i, j))_{\ell = 1}^t$. Let $\tau(m)$ be the time of observing item pair $(i, j)$ for the $m$-th time, $\tau(m) = \min \set{\ell \in [t]: \rnd{n}_\ell(i, j) = m}$ for $m \in [\rnd{n}_t(i, j)]$. Let $\rnd{z}_\ell = \rnd{c}_\ell(\rnd{\cR}_\ell^{-1}(i)) - \rnd{c}_\ell(\rnd{\cR}_\ell^{-1}(j))$. Since $(\rnd{n}_\ell(i, j))_{\ell = 1}^t$ is fixed, note that $\rnd{z}_\ell \neq 0$ if $\ell = \tau(m)$ for some $m \in [\rnd{n}_t(i, j)]$. Let $\rnd{X}_0 = 0$ and
\begin{align*}
  \rnd{X}_\ell =
  \sum_{\ell' = 1}^\ell \condEsub{\rnd{z}_{\tau(\ell')}}{\rnd{z}_{\tau(\ell')} \neq 0}{\tau(\ell') - 1} - \rnd{s}_{\tau(\ell)}(i, j)
\end{align*}
for $\ell \in [\rnd{n}_t(i, j)]$. Then $(\rnd{X}_\ell)_{\ell = 1}^{\rnd{n}_t(i, j)}$ is a martingale, because
\begin{align*}
  \rnd{X}_\ell - \rnd{X}_{\ell - 1} &=
  \condEsub{\rnd{z}_{\tau(\ell)}}{\rnd{z}_{\tau(\ell)} \neq 0}{\tau(\ell) - 1} -
  (\rnd{s}_{\tau(\ell)}(i, j) - \rnd{s}_{\tau(\ell - 1)}(i, j)) \\
  &=
  \condEsub{\rnd{z}_{\tau(\ell)}}{\rnd{z}_{\tau(\ell)} \neq 0}{\tau(\ell) - 1} -
  \rnd{z}_{\tau(\ell)}\,,
\end{align*}
where the last equality follows from the definition of $\rnd{s}_{\tau(\ell)}(i, j) - \rnd{s}_{\tau(\ell - 1)}(i, j)$. Now we apply the Azuma-Hoeffding inequality and get that
\begin{align*}
  P\left(\rnd{X}_{\rnd{n}_t(i, j)} - \rnd{X}_0 \geq 2 \sqrt{\rnd{n}_t(i, j) \log(1 / \delta)}\right) \leq \delta^\frac{1}{2}\,.
\end{align*}
Moreover, from the definitions of $\rnd{X}_0$ and $\rnd{X}_{\rnd{n}_t(i, j)}$, and by \cref{lem:click loss}, we have that 
\begin{align*}
\delta^\frac{1}{2} \geq {}  & P\left(\rnd{X}_{\rnd{n}_t(i, j)} - \rnd{X}_0 \geq 2 \sqrt{\rnd{n}_t(i, j) \log(1 / \delta)}\right) \\
{}={} & P\left(\sum_{\ell' = 1}^{\rnd{n}_t(i, j)} 
  \condEsub{\rnd{z}_{\tau(\ell')}}{\rnd{z}_{\tau(\ell')} \neq 0}{\tau(\ell') - 1} - \rnd{s}_t(i, j) \geq
  2 \sqrt{\rnd{n}_t(i, j) \log(1 / \delta)}\right) \\
  {}\geq{} & P\left(\frac{\alpha(i) - \alpha(j)}{\alpha(i) + \alpha(j)} \rnd{n}_t(i, j) -
  \rnd{s}_t(i, j) \geq 2 \sqrt{\rnd{n}_t(i, j) \log(1 / \delta)}\right) \\
  {}={} & P\left(\frac{\alpha(i) - \alpha(j)}{\alpha(i) + \alpha(j)} \rnd{n}_t(i, j) -
  2 \sqrt{\rnd{n}_t(i, j) \log(1 / \delta)} \geq \rnd{s}_t(i, j)\right).
\end{align*}
The above inequality holds for any $(\rnd{n}_\ell(i, j))_{\ell = 1}^t$, and therefore also in expectation over $(\rnd{n}_\ell(i, j))_{\ell = 1}^t$. From the definition of $\cE_{t, 1}$ and the union bound, we have $\mathbb{P}(\overline{\cE_{t, 1}}) \leq \frac{1}{2} \delta^\frac{1}{2} K (K - 1)$.

The claim that $\mathbb{P}(\overline{\cE_{t, 2}}) \leq \frac{1}{2} \delta^\frac{1}{2} K (K - 1)$ is proved similarly, except that we use $\condEsub{\rnd{z}_{\tau(\ell)}}{\rnd{z}_{\tau(\ell)} \neq 0}{\tau(\ell) - 1} \leq 0$. From the definition of $\ccE$ and the union bound,
\begin{align*}
  \mathbb{P}(\ccE) \leq
  \sum_{t = 1}^n \mathbb{P}(\overline{\cE_{t, 1}}) + \sum_{t = 1}^n \mathbb{P}(\overline{\cE_{t, 2}}) \leq
  \delta^\frac{1}{2} K^2 n\,.
\end{align*}
This completes our proof.
\end{proof}

\section{RESULTS WITH NDCG}
\label{sec:results with ndcg}

In this section, we report the NDCG of compared algorithms, which measures the quality of displayed lists. 
Since $\cascadeklucb$ fails in the PBM and we focus on learning from all types of click feedback, we leave out $\cascadeklucb$ from this section. 

In the first two experiments, we evaluate algorithms by their regret in \eqref{eq:regret} and safety constraint violation in \eqref{eq:constraint violation}. Neither of these metrics measure the quality of ranked lists directly. In this experiment, we report the per-step NDCG@$5$ of $\bubblerank$, $\batchrank$, $\toprank$, and $\baseline$ (\cref{fig:ndcg}), which directly measures the quality of ranked lists and is widely used in the LTR literature \cite{Jarvelin:2002:CGE:582415.582418,allan2017trec}. Since the \emph{Yandex} dataset does not contain relevance scores for all query-item pairs, we take the attraction probability of the item in its learned click model as a proxy to its relevance score. This substitution is natural since our goal is to rank items in the descending order of their attraction probabilities~\cite{chuklin2015click}. We compute the NDCG@$5$ of a ranked list $\cR$ as
\begin{align*}
\mathit{NDCG}@5(\cR) =
\frac{\mathit{DCG}@5(\cR)}{\mathit{DCG}@5(\cR^\ast)}\,, \quad
\mathit{DCG}@5(\cR) =
\sum_{k = 1}^{5} \frac{\alpha(\cR(k))}{\log_{2}(k + 1)}\,, 
\end{align*}
where $\cR^\ast$ is the optimal list and $\alpha(\cR(k)$ is the attraction probability of the $k$-th item in list $\cR$. This is a standard evaluation metric, and is used in TREC evaluation benchmarks~\cite{allan2017trec}, for instance. It measures the discounted gain over the attraction probabilities of the $5$ highest ranked items in list $\cR$, which is normalized by the DCG@$5$ of $\cR^\ast$. 

In \cref{fig:ndcg}, we observe that  $\baseline$ has good NDCG@$5$ scores in all click models. 
Yet there is still room for improvement.
$\bubblerank$, $\batchrank$, and $\toprank$ have similar NDCG@$5$ scores after $5$ million steps. But $\bubblerank$ starts with NDCG@$5$ close to that of $\baseline$, while $\batchrank$ and $\toprank$ start with lists with very low NDCG@$5$.

These results validate our earlier findings. As in \cref{sec:results with regret}, we observe that $\bubblerank$ converges to the optimal list in hindsight, since its NDCG@$5$ approaches $1$. As in \cref{sec:safety results}, we observe that $\bubblerank$ is safe, since its NDCG@$5$ is never much worse than that of $\baseline$.

\begin{figure*}[!h]
	\includegraphics{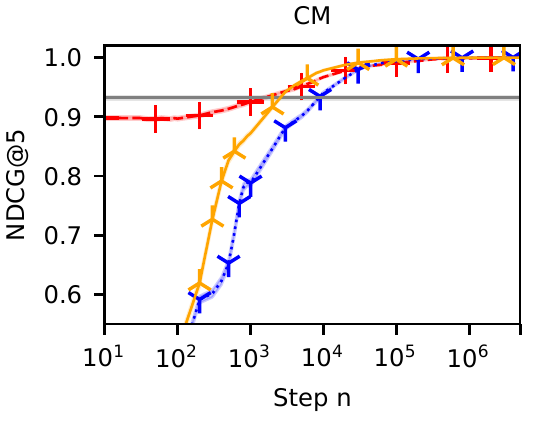}
	\includegraphics{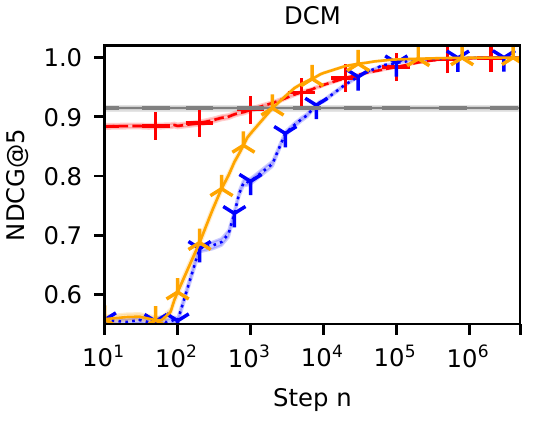}
	\includegraphics{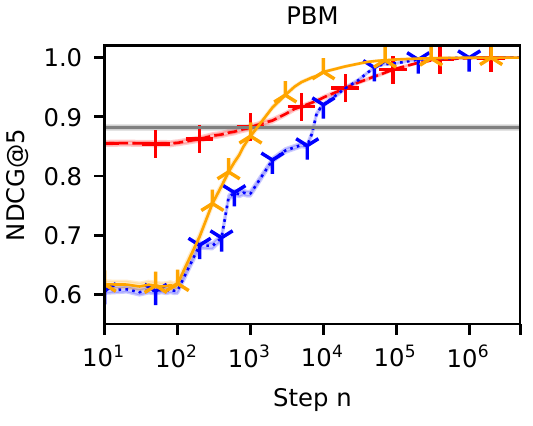}
	\caption{The per-step NDCG@$5$ of $\bubblerank$ (red), $\batchrank$ (blue), $\toprank$ (orange),  and $\baseline$ (grey) in the CM, DCM, and PBM in up to $5$ million steps. Higher is better. The shaded regions represent standard errors of our estimates.}
	\label{fig:ndcg}
\end{figure*}

\end{document}